\newcommand{\x}{\mathbf{x}}
\newcommand{\tabincell}[2]{\begin{tabular}{@{}#1@{}}#2\end{tabular}}
  \providecommand\BibTeX{{%
    \normalfont B\kern-0.5em{\scshape i\kern-0.25em b}\kern-0.8em\TeX}}}
\begin{document}

\title{Hashing-Accelerated Graph Neural Networks for Link Prediction}


\author{Wei Wu}
\affiliation{%
  \institution{L3S Research Center, Leibniz University Hannover}
  \city{Hannover}
  \country{Germany}}
\email{william.third.wu@gmail.com}

\author{Bin Li}
\authornote{Corresponding author.}
\affiliation{%
  \institution{School of Computer Science, Fudan University}
  \city{Shanghai}
  \country{China}
}
\email{libin@fudan.edu.cn}

\author{Chuan Luo}
\affiliation{%
 \institution{Microsoft Research}
 \city{Beijing}
 \country{China}}
\email{chuan.luo@microsoft.com}

\author{Wolfgang Nejdl}
\affiliation{%
  \institution{L3S Research Center, Leibniz University Hannover}
  \city{Hannover}
  \country{Germany}}
\email{nejdl@l3s.de}

\renewcommand{\shortauthors}{Wei Wu, Bin Li, Chuan Luo and Wolfgang Nejdl}

\begin{abstract}
Networks are ubiquitous in the real world. Link prediction, as one of the key problems for network-structured data, aims to predict whether there exists a link between two nodes. The traditional approaches are based on the explicit similarity computation between the compact node representation by embedding each node into a low-dimensional space. In order to efficiently handle the intensive similarity computation in link prediction, the hashing technique has been successfully used to produce the node representation in the Hamming space. However, the hashing-based link prediction algorithms face accuracy loss from the randomized hashing techniques or inefficiency from the learning to hash techniques in the embedding process. Currently, the Graph Neural Network (GNN) framework has been widely applied to the graph-related tasks in an end-to-end manner, but it commonly requires substantial computational resources and memory costs due to massive parameter learning, which makes the GNN-based algorithms impractical without the help of a powerful workhorse. In this paper, we propose a simple and effective model called \#GNN, which balances the trade-off between accuracy and efficiency. \#GNN is able to efficiently acquire node representation in the Hamming space for link prediction by exploiting the randomized hashing technique to implement message passing and capture high-order proximity in the GNN framework. Furthermore, we characterize the discriminative power of \#GNN in probability. The extensive experimental results demonstrate that the proposed \#GNN algorithm achieves accuracy comparable to the learning-based algorithms and outperforms the randomized algorithm, while running significantly faster than the learning-based algorithms. Also, the proposed algorithm shows excellent scalability on a large-scale network with the limited resources.
\end{abstract}

\begin{CCSXML}
<ccs2012>
<concept>
<concept_id>10002951.10003227.10003351</concept_id>
<concept_desc>Information systems~Data mining</concept_desc>
<concept_significance>500</concept_significance>
</concept>
</ccs2012>
\end{CCSXML}

\ccsdesc[500]{Information systems~Data mining}

\keywords{Link Prediction, Attributed Network, Hashing, Graph Neural Networks}


\maketitle

\section{Introduction}

Networks are ubiquitous in the real world, for example, the social network, the co-authorship network and the World Wide Web, etc., which has also fostered the network mining research. A significant research topic, \emph{link prediction}, is to predict whether there exists a link between two nodes. Furthermore, it underpins many high-level applications, e.g., friend recommendation in social networks \cite{adamic2003friends}, product recommendation in e-commerce \cite{koren2009matrix}, finding interactions between proteins \cite{nickel2015review}, metabolic network reconstruction \cite{oyetunde2017boostgapfill}, knowledge graph completion \cite{nickel2015review}, etc. In real-world scenarios, one network contains not only complex structure information between nodes but also rich content information carried by the nodes, both of which facilitate us to understand the networks better. For example, a protein-protein interaction network is composed of proteins as nodes where physico-chemical properties of the proteins are attributes and interaction between proteins as edges; in a follower-followee social network, nodes denoting users contain many profiles. Although many algorithms can be used for link prediction, e.g., DeepWalk \cite{perozzi2014deepwalk}, node2vec \cite{grover2016node2vec}, LINE \cite{tang2015line}, NetMF \cite{qiu2018network}, INH-MF \cite{lian2018high}, NodeSketch \cite{yang2019nodesketch} and WLNM \cite{zhang2017weisfeiler}, etc., the above methods capture only structure information of the network, which means that important attribute information is simply ignored. Therefore, it is of significance to develop link prediction algorithms which can simultaneously preserve attribute and structure information of the network.

Thus far much effort has been devoted to the link prediction algorithms on attributed networks. The traditional approaches are based on the explicit similarity computation between the simplified node representation by embedding each node into a low-dimensional space. For example, the algorithms based on cosine similarity such as TADW \cite{yang2015network}, HSCA \cite{zhang2016homophily} and CANE \cite{tu2017cane} preserve both attribute and structure information in the Euclidean space. However, it is insufficiently efficient to calculate the cosine similarity for link prediction which heavily involves the similarity computation \cite{lian2018high,yang2019nodesketch}. In order to address the issue, the hashing techniques including the learning to hash methods such as BANE \cite{yang2018binarized} and LQANR \cite{yang2019low} and the randomized hashing ones such as NetHash \cite{wu2018efficient} have been applied to embed the nodes into the Hamming space. Consequently, the rapid Hamming similarity computation remarkably speedups link prediction \cite{lian2018high,yang2019nodesketch}. Despite that, in the embedding process, the former is usually time-consuming due to hash code learning, while the latter pursues high efficiency at the sacrifice of precision. 

Currently, the popular solution to the graph-related tasks is the Graph Neural Network (GNN) framework, which effectively learns the hidden patterns and acquires node embedding with high representational power by propagating and aggregating information in the Message Passing scheme \cite{duvenaud2015convolutional,li2016gated,kearnes2016molecular,schutt2017quantum,gilmer2017neural,lei2017deriving,zhou2019scenegraphnet,zhang2020dynamic}. In particular, Weisfeiler-Lehman Kernel Neural Network (WLKNN) \cite{lei2017deriving} can iteratively capture high-order proximity for graph classification by simulating the process of node relabeling in the Weisfeiler-Lehman (WL) graph kernel \cite{shervashidze2011weisfeiler}. Furthermore, many GNN-based algorithms are proposed to conduct the link prediction task in an end-to-end manner, e.g., SEAL \cite{zhang2018link}, GraphSAGE \cite{hamilton2017inductive} and P-GNN \cite{you2019position}. However, the methods require expensive computational resources and memory costs due to massive parameter learning, which in turn hinders the application of these algorithms without the help of a powerful workhorse.

In this paper, we aim to keep a balance between accuracy and efficiency by leveraging the advantages of the hashing techniques and the GNN framework. We observe an interesting connection between WLKNN and MinHash, which is a well-known randomized hashing scheme in the bag-of-words model \cite{broder1998min,wu2018review}, and then are inspired to exploit MinHash to facilitate rapid node representation in WLKNN. Instead of learning weight matrices in WLKNN, we directly employ the random permutation matrices, which are mathematically equivalent with the random permutation operation in MinHash. Consequently, in the WLKNN scheme, the MinHash algorithm iteratively sketches each node and its neighboring nodes and generate node representation, during which the high-order messages are passed along the edges and aggregated within the node. We name the resulting algorithm on the attributed network \#GNN, which efficiently preserves the high-order attribute and structure information for each node in the GNN framework. Considering the theory that a maximally powerful GNN would never map two different substructures to the same representation \cite{xu2018how,morris2019weisfeiler}, we generalize the discriminative power of GNN in probability and further derive the representational power of \#GNN --- it could map two substructures in the graph into the same representation with the probability of their similarity. From the global perspective, after all nodes update the representation, \#GNN starts the next iteration and more importantly, just depends on the present node representation in order to generate the latest representation. Naturally, one produces a Markov chain which is composed of a sequence of attributed networks in the above iteration process. Based on this, \#GNN is scalable w.r.t. the orders of the neighboring nodes, i.e., the number of iterations.

We provide theoretical analysis of the expressive power and conduct extensive empirical evaluation of the proposed \#GNN algorithm and a collection of the state-of-the-art methods, including the hashing-based algorithms and the GNN-based ones, on a number of real-world network datasets. Additionally, we test the scalability of the hashing-based algorithms on a large-scale network with millions of nodes and hundreds of thousands of attributes, and conduct parameter sensitivity analysis of \#GNN. In summary, our contributions are three-fold:
\begin{enumerate}
\item To the best of our knowledge, this work is the first endeavor to introduce the randomized hashing technique into the GNN framework for significant efficiency improvement.
\item We present a scalable link prediction algorithm on the attributed network called \#GNN, which efficiently exploits the randomized hashing technique to capture high-order proximity and to acquire node representation in the GNN framework. Also, we characterize the representational power of \#GNN in probability. 
\item The experimental results show that the proposed \#GNN algorithm achieves accuracy comparable to the learning-based algorithms with significantly reduced runtime (by $2\sim4$ orders of magnitude faster than the GNN-based algorithms). In addition, \#GNN demonstrates excellent scalability on a large-scale network. 
\end{enumerate}

The rest of the paper is organized as follows: Section \ref{sec:work} reviews the related work of graph hashing, graph neural networks and link prediction. Section \ref{sec:pre} introduces the necessary preliminary knowledge. We describe the proposed \#GNN algorithm in Section \ref{sec:algo}. The experimental results are presented in Section \ref{sec:exp}. Finally, we conclude in Section \ref{sec:con}.

\section{Related Work}
\label{sec:work}


\subsection{Graph Hashing}

Hashing techniques have been extensively used to efficiently approximate the high-dimensional data similarity by mapping the similar data instances to the same data points represented as the vectors. The existing hashing techniques consists of two branches: learning to hash and randomized hashing. The former learns the data-specific hash functions to fit the data distribution in the feature space, e.g., Spectral Hashing \cite{weiss2009spectral}, Semantic Hashing \cite{salakhutdinov2009semantic}. 
By contrast, the latter represents the complex data as the compact hash codes by exploiting a family of randomized hash functions, e.g., MinHash \cite{broder1998min}, Weighted MinHash \cite{wu2016canonical,wu2017consistent,wu2018improved}, SimHash \cite{charikar2002similarity,manku2007detecting}, Feature Hashing \cite{weinberger2009feature}.

Hashing techniques have been applied to preserve graph structure information. Discrete Graph Hashing learns high-quality binary hash codes in a discrete optimization framework where the symmetric discrete constraint is designed to preserve the neighborhood structure \cite{liu2014discrete}. By contrast, Asymmetric Discrete Graph Hashing improves performance and lowers the training cost by employing the semantic information and asymmetric discrete constraint \cite{shi2017asymmetric}. In order to further improve efficiency in large-scale problems, Scalable Graph Hashing avoids explicit similarity computation \cite{jiang2015scalable}. In terms of randomized hashing techniques, a 2-dimensional hashing scheme summarizes the edge set and the frequent patterns of co-occurrence edges in the graph streams \cite{aggarwal2011classification}. In \cite{li2012nest, wu2017k}, the randomized hashing algorithms aim to approximately count the tree substructures and convert the graph into a feature vector for graph classification. 

\subsection{Graph Neural Networks}

In recent years, researchers have focused on the Graph Neural Network (GNN) framework, which conducts the graph-related tasks in an end-to-end manner, because it shows the powerful ability to learn hidden patterns in the graph data. The GNN-based algorithms broadly compute node representation in the Message Passing scheme, where information can be propagated from one node to another along edges directly and then aggregated.

Message Passing Neural Network (MPNN) \cite{gilmer2017neural}, which generalizes the Message Passing process in the GNN framework, propagates information further by running multiple message passing iterations for graph classification. As another instance of the Message Passing scheme, Weisfeiler-Lehman Kernel Neural Network (WLKNN) \cite{lei2017deriving} integrates the Weisfeiler-Lehman (WL) graph kernel \cite{shervashidze2011weisfeiler} for propagation and aggregation. However, Xu et.al. point out that the methods are incapable of distinguishing different graph structures to some extent because the different substructures are possibly mapped to the same embedding, and the WL isomorphism test \cite{weisfeiler1968reduction} is the maximal  representational capacity of the GNN models \cite{xu2018how}. Furthermore, Graph Isomorphism Network (GIN) is proposed, which discriminates different graph structures by mapping them to different representations in the embedding space \cite{xu2018how}. In addition to graph classification, Tan et.al. combine the learning to hash technique to the GNN framework for information retrieval \cite{tan2020learning}.

\subsection{Link Prediction}
Link prediction has been popular for the past decades in network analysis \cite{liben2007link}. The traditional link prediction approaches explicitly compute similarity between the nodes of the network, which assumes that two nodes of the network will interact if they are similar in terms of a certain measure, by embedding each node into a low-dimensional space, 
e.g., DeepWalk \cite{mikolov2013distributed}, node2vec \cite{grover2016node2vec}, LINE \cite{tang2015line}, NetMF \cite{qiu2018network}, TADW \cite{yang2015network} and HSCA \cite{zhang2016homophily}. The above algorithms based on cosine similarity are not efficient enough for link prediction which relies on intensive similarity computation \cite{lian2018high,yang2019nodesketch}. To this end, the hashing techniques have been applied because the Hamming similarity computation is superior to the cosine similarity computation in terms of efficiency. INH-MF \cite{lian2018high} and NodeSketch \cite{yang2019nodesketch} preserve only structure information by employing the learning to hash technique and the randomized hashing technique, respectively. Furthermore, BANE \cite{yang2018binarized} and LQANR \cite{yang2019low} learn hash codes to capture attribute and structure information simultaneously, but they are usually expensive in either time or space because they contain massive matrix factorization operations in the learning process. By contrast, NetHash \cite{wu2018efficient} employs the randomized hashing technique to independently sketch the trees rooted at each node and to efficiently preserve attribute and structure information, but it sacrifices precision. On the other hand, it practically explores just limited-order proximity (1st$\sim$3rd order mentioned in \cite{wu2018efficient}), especially in the network with high degrees because each rooted tree must be sketched completely and independently, which makes each node not shared and the time complexity exponential w.r.t. the depth of the rooted trees.

Additionally, the GNN framework has been broadly used for link prediction in an end-to-end manner. WLNM \cite{zhang2017weisfeiler} utilizes only the structure information via subgraph extraction. The process 
requires predefining the size of the subgraphs. Consequently, the truncation operation might give ries to information loss. GraphSAGE \cite{hamilton2017inductive} leverages attribute information and various pooling techniques, and it cannot achieve the maximal representational power because it could embed nodes at symmetric positions into the same embedding. SEAL \cite{zhang2018link} handles with attributes and variable-sized $T$-hop subgraphs together to improve performance. However, the $T$-hop subgraph extraction substantially increases computational cost and memory consumption with $T$ increasing. 
P-GNN \cite{you2019position} further improves the representational power by introducing node position information. Compared with the similarity-based approaches, the GNN-based algorithms are very inefficient in terms of time and space due to massive parameter computation. 

\section{Preliminaries}
\label{sec:pre}

In this section, we introduce some background knowledge in link prediction and methodology, which is helpful to understand the proposed \#GNN method.

\subsection{Link Prediction on an Attributed Network}
Given an attributed network $G=(V,E,f)$, where $V$ denotes the node set of the network, $E$ denotes the undirected edge set of the network, and $f: V \mapsto A$ is a function that assigns the nodes with attributes from an attribute set $A$. A node $v\in V$ with attributes $f(v)$ is represented as a $0/1$ feature vector $\mathbf{v}_v\in \{0, 1\}^{|A|}$, whose dimensions are attributes in $A$. The link prediction task is to predict whether there exists a link between two nodes in the attributed network. In the traditional similarity-based algorithms, link prediction is based on the simplified node representation by embedding each node $v$ as a low-dimensional vector $\x_v \in \mathbb{R}^K$, where $K\ll |A|$, while preserving as much attribute and structure information as possible.

\subsection{Weisfeiler-Lehman Kernel Neural Network}

\begin{figure}[t]
\setlength{\abovecaptionskip}{5pt}%
\setlength{\belowcaptionskip}{0pt}%
\centering
\includegraphics[width=\linewidth]{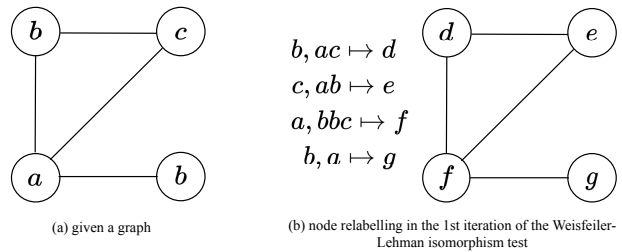}
  \caption{Node relabelling in the Weisfeiler-Lehman isomorphism test. Subplot (a) gives a 4-node graph where each node is assigned with a label, and Subplot (b) shows node relabelling in the 1st iteration.}
\label{fig:wl}
\end{figure}

Weisfeiler-Lehman Kernel Neural Network (WLKNN) \cite{lei2017deriving} introduces the Weisfeiler-Lehman (WL) graph kernel \cite{shervashidze2011weisfeiler} into the graph neural network for graph classification. The key idea of the WL graph kernel is the classic WL isomorphism test \cite{weisfeiler1968reduction} for the labelled graphs, which iteratively encodes the node label and the sorted node labels of all its neighboring nodes as a new label, as shown in Figure \ref{fig:wl}. Such a relabelling process is repeated $T$ times, and each node is assigned with a new label in the $t$-th iteration such that the $t$-order information is preserved.

Interestingly, the WLKNN implies node representation by passing information from one node to another along the edges and aggregating the node labels in the way of node relabelling. The node representation in the $t$-th iteration can be written as follows
\begin{equation}
  \mathbf{x}_{v}^{(t)}=\sigma\Big(\big(\mathbf{U}_{1}^{(t)}\mathbf{x}_{v}^{(t-1)}\big)\circ\big(\mathbf{U}_{2}^{(t)}\sum\limits_{u\in N(v)}\sigma(\mathbf{U}_{3}^{(t)}\mathbf{x}_{u}^{(t-1)})\big)\Big), 
  \label{eq:wl_kernel}
\end{equation}
where $\mathbf{x}_{v}$ denotes the node $v$'s representation, $t$ is the $t$-th iteration, $\mathbf{U}_{1}$, $\mathbf{U}_{2}$ and $\mathbf{U}_{3}$ represent the shared weight matrices, $\circ$ is the combination operation (e.g., addition) on the node itself and all its neighboring nodes, $\sum$ is the aggregation operation (e.g., max pooling) on all the neighboring nodes and $\sigma$ is the nonlinear activation (e.g., ReLU).

\subsection{MinHash}
Given a universal set $U$ and a subset $S\subseteq U$, MinHash is generated as follows: Assuming that a set of $K$ randomized hash functions (or $K$ random permutations), $\{\pi^{(k)}\}_{k=1}^K$, are applied to $U$, the elements in $S$ which have the minimum hash value in each hash function (or which are placed in the first position of each permutation), $\{\arg\min\big(\pi^{(k)}(S)\big)\}_{k=1}^K$, would be the MinHashes of $S$, and $\{\min\big(\pi^{(k)}(S)\big)\}_{k=1}^K$ would be the corresponding hash function values (or the positions after permutations) of the above MinHashes of $S$ under $\{\pi^{(k)}\}_{k=1}^K$~\cite{broder1998min}.

It is easily observed that all the elements in $U$ are sampled equally because all the elements are mapped to the minimum hash value with equal probability. MinHash is an approximate algorithm for computing the Jaccard similarity of two sets. It has been proved that the probability of two sets, $S$ and $T$, to generate the same MinHash value (hash collision) is exactly equal to the Jaccard similarity of the two sets~\cite{broder1998min}:
\begin{equation}
\Pr\Big(\min\big(\pi^{(k)}(S)\big)=\min\big(\pi^{(k)}(T)\big)\Big) = J(S,T)=\dfrac{|S \cap T|}{|S \cup T|}.
\label{eq:minhash}
\end{equation}

To approximate the expected probability, multiple independent random permutations are used to generate MinHash values. The similarity between two sets based on $K$ MinHashes is calculated by
\begin{equation}
  \hat{J}(S,T) = \frac{\sum_{k=1}^K \mathbbm{1}\Big(\min\big(\pi^{(k)}(S)\big)=\min\big(\pi^{(k)}(T)\big)\Big)}{K}, \label{eq:approx_minhash}
\end{equation}
where $\mathbbm{1}(state)=1$, if $state$ is true, and $\mathbbm{1}(state)=0$, otherwise. As $K \rightarrow \infty$, $\hat{J}(S,T) \rightarrow J(S,T)$.

Considering the high complexity of the random permutation, we practically adopt the hash function, $\pi^{(k)}(i)=\mod(a^{(k)}i+b^{(k)}, c^{(k)})$, where $i$ is the index of the element from $U$, $0 <a^{(k)}, b^{(k)}\\< c^{(k)}$ are two random integers and $c^{(k)}$ is a big prime number such that $c^{(k)}\ge |U|$ \cite{broder1998min}.

\section{Hashing-Accelerated Graph Neural Networks}
\label{sec:algo}

In this section, we will present an efficient hashing model based on the GNN framework, which embeds each node in the attributed network into a low-dimensional space and promotes the similarity-based link prediction. 

\subsection{The Algorithm}

\begin{algorithm}[t]
\fontsize{9pt}{\baselineskip}\selectfont{\caption{The \#GNN Algorithm}
\label{alg:gnn-hash}
\begin{algorithmic}[1]
  \REQUIRE $G=(V,E, f)$, $T$, $K$, $\{\pi^{(t,k)}_{1}, \pi^{(t,k)}_{2}, \pi_{3}^{(t,k)}\}_{t=1,k=1}^{T,K}$ 
  \ENSURE $\mathbf{H}$
  \FOR {$t=1,\cdots,T$}
    \FOR{$k=1,\cdots,K$}
        \FOR {$v\in V$}
            \STATE $x^{(t,k)}_{v,1} \leftarrow \arg\min\big(\pi_{3}^{(t,k)}(\x^{(t-1)}_v)\big)$
        \ENDFOR
        \FOR {$v\in V$}
            \STATE $\x^{(t,k)}_{v,neighbors} \leftarrow \bigcup\limits_{\mathclap{u\in N(v)}}\{x^{(t,k)}_{u,1}\}$ 
            \STATE $\x^{(t)}_v[k] \leftarrow \arg\min\big(\pi^{(t,k)}_{1}(\x^{(t-1)}_v)\bigcup \pi^{(t,k)}_{2}(\x^{(t,k)}_{v,neighbors})\big)$  
        \ENDFOR
    \ENDFOR
    \FOR{$v\in V$}
        \STATE $\mathbf{H}[v,:] \leftarrow \x^{(t)}_{v}\top$
    \ENDFOR
  \ENDFOR
  \RETURN $\mathbf{H}$            
\end{algorithmic}}
\end{algorithm}

We show the proposed model in Algorithm \ref{alg:gnn-hash}. The input contains an attributed network $G=(V,E, f)$, the number of iterations $T$, the size of node representation $K$, and three arrays of randomized hash functions\footnote{In this work, we practically adopt the randomized hash functions to implement the MinHash scheme, and thus each element returns a hash value.} $\{\pi^{(t,k)}_{1}, \pi^{(t,k)}_{2}, \pi^{(t,k)}_{3}\}_{t=1,k=1}^{T,K}$ at the $t$-th iteration and the $k$-th hashing process. The output is $|V|\times K$ matrix $\mathbf{H}$, where each row denotes a node representation.

The proposed algorithm captures high-order node proximity in the Message Passing scheme in a similar way of the WL isomorphism test. In each iteration, Algorithm \ref{alg:gnn-hash} generates the $K$-dimensional representation for each node via $K$ independent MinHash processes. Specifically, in each hashing process\footnote{In the following, we omit $k$ and $t$ in $\pi, x, \x$ for simplicity below.}, it consists of two parts. In Part \uppercase\expandafter{\romannumeral 1} (Lines 3-5), it adopts a MinHash scheme $\pi_{3}$ to summarize the whole network and to allocate each node $v$ message $x_{v, 1}$, which denotes the content diffused from the node itself to all its neighboring nodes. In Part \uppercase\expandafter{\romannumeral 2} (Lines 6-9),  for each node $v$, it first aggregates the messages from all its neighboring nodes produced in Part \uppercase\expandafter{\romannumeral 1}; subsequently, it sketches the node itself $\x_v$ and the corresponding aggregated messages $\x_{v,neighbors}$ by two different MinHash 
schemes $\pi_{1}$ and $\pi_{2}$, and assigns the message which has the minimum hash value under the two MinHash schemes to the corresponding dimension of the node embedding.
After $K$ MinHash operations are finished, the resulting node representation of the attributed network is stored in $\mathbf{H}$ (Lines 11-13). Based on the updated representation, the algorithm starts the next iteration in order to capture higher-order node proximity.

Algorithm \ref{alg:gnn-hash} resembles Eq.~(\ref{eq:wl_kernel}), where random permutation $\pi$ in MinHash is mathematically equivalent with the uniformly distributed random matrix (i.e., random permutation matrix $\Pi\in\{0,1\}^{|A|\times|A|}$). The operations $\bigcup$ (Algorithm \ref{alg:gnn-hash} Line 7), $\bigcup$ (Algorithm \ref{alg:gnn-hash} Line 8) and $\arg\min$ (Algorithm \ref{alg:gnn-hash} Line 8) play the same roles as $\sum$, $\circ$ and $\sigma$ in Eq.~(\ref{eq:wl_kernel}), respectively. It remarkably improves efficiency by avoiding tedious parameter learning.

\subsection{A Markov Chain View}


In Algorithm \ref{alg:gnn-hash}, each node updates the representation by passing and aggregating messages in the GNN framework. From the global perspective, after all the nodes are updated, the whole attributed network transits from one state to another one and more importantly, the whole network state just depends on the last state. 

Based on the above observation, Algorithm \ref{alg:gnn-hash} actually generates a Markov chain composed of the whole attributed network states, each of which implies node representation in the corresponding iteration. Formally, we let $\mathbf{V}^{(t)}=[\mathbf{x}_{v_1}^{(t)}, \mathbf{x}_{v_2}^{(t)}, \cdots, \mathbf{x}_{v_{n}}^{(t)}]\top$ represent the attributed network state at the $t$-th iteration\footnote{We assume $|V|=n$.}, where $\mathbf{V}^{(0)}=[\mathbf{x}_{v_1}^{(0)}, \mathbf{x}_{v_2}^{(0)}, \cdots, \mathbf{x}_{v_{n}}^{(0)}]\top=[\mathbf{v}_{v_1}, \mathbf{v}_{v_2}, \cdots, \mathbf{v}_{v_{n}}]\top$ is the initial attributed network state\footnote{The MinHash functions are unrelated with dimensions of $\x_v$ and $\mathbf{v}_v$, so we unify $\x^{(t)}$ where $t>0$ and $\x^{(0)}$ into $\x^{(t)}$ for convenience.}. Consequently, we have 
\begin{equation}
    \Pr(\mathbf{V}^{(t)}|\mathbf{V}^{(t-1)}, \mathbf{V}^{(t-2)},\cdots, \mathbf{V}^{(0)}) = \Pr(\mathbf{V}^{(t)}|\mathbf{V}^{(t-1)}).
\end{equation}

Considering the fact that each node just captures the information from itself and all its neighboring nodes at the $(t-1)$-th iteration, we define the state transition function $\hbar^{(t)}: \{\mathbf{V}^{(t-1)}\}\mapsto \{\mathbf{V}^{(t)}\}$:

\begin{equation}
    \hbar^{(t)}(\mathbf{V}^{(t-1)})  = 
\left[
\begin{matrix}
\hbar^{(t)}\big((\mathbf{x}_{v_{1}}^{(t-1)}, \bigcup\limits_{\mathclap{u_{1}\in N(v_{1})}}\mathbf{x}_{u_{1}}^{(t-1)})\big) \\
\cdots \\
\hbar^{(t)}\big((\mathbf{x}_{v_{n}}^{(t-1)}, \bigcup\limits_{\mathclap{u_{n}\in N(v_{n})}}\mathbf{x}_{u_{n}}^{(t-1)})\big) \\
\end{matrix}
\right]\\
 = 
\left[
\begin{matrix}
\mathbf{x}_{v_{1}}^{(t)} \\
\cdots \\
\mathbf{x}_{v_{n}}^{(t)} \\
\end{matrix}
\right]=\mathbf{V}^{(t)}, 
\label{eq:transition2}
\end{equation}
where $\hbar^{(t)}$ inherits Eq.~(\ref{eq:wl_kernel}) and maps the node and all its neighboring nodes at the $(t-1)$-th iteration into the embedding at the $t$-th iteration. Evidently, the state transition function is the node-wise operation because it independently feeds one node and its neighboring nodes, yielding the corresponding node representation.

Due to the property of the Markov chain that the whole network state at the $t$-th iteration just relies on the one at the $(t-1)$-th iteration, the proposed \#GNN algorithm is scalable w.r.t. the number of iterations and efficiently captures high-order attribute and structure information based on the last node representation.

\subsection{Theoretical Analysis}
\label{subsec:theo}


\subsubsection{Similarity Estimation}

Given two nodes $v_1$ and $v_2$, their similarity at the $t$-th iteration is
\begin{equation}
    Sim^{(t)}(v_1, v_2) = \mathbb{E}_{\hbar^{(1)}, \cdots, \hbar^{(t)}}[H(\x_{v_1}^{(t)}, \x_{v_2}^{(t)})], 
\label{eq:similarity} 
\end{equation}
where $\x_{v_1}^{(t)}$ and $\x_{v_2}^{(t)}$ are the node representation of $v_1$ and $v_2$ at the $t$-th iteration, respectively, and $H(\cdot,\cdot)$ is the Hamming similarity, i.e., $H(\x_{v_1}^{(t)}, \x_{v_2}^{(t)})=\frac{1}{K}\sum_{i=1}^K\mathbbm{1}(x_{v_1,i}^{(t)}=x_{v_2,i}^{(t)})$, and $x_{v_1,i}^{(t)}$ is the value at the $i$-th dimension of $\x_{v_1}^{(t)}$.

\subsubsection{Representational Power}

The WL isomorphism test is maximally powerful to distinguish the substructures in the graph because the test can map different subtrees composed of the rooted node and its neighboring nodes to different feature vectors \cite{xu2018how,morris2019weisfeiler}. From the perspective of MinHash Eq.~(\ref{eq:minhash}), the maximally powerful GNN algorithm has  
\begin{eqnarray}
   \Pr\Big(\phi^{(t)}\big((\mathbf{x}_{v_{1}}^{(t-1)}, \sum\limits_{\mathclap{u_{1}\in N(v_{1})}}\mathbf{x}_{u_{1}}^{(t-1)})\big) = \phi^{(t)}\big((\mathbf{x}_{v_{2}}^{(t-1)}, \sum\limits_{\mathclap{u_{2}\in N(v_{2})}}\mathbf{x}_{u_{2}}^{(t-1)})\big)\Big) \nonumber\\ =  \left\{
\begin{array}{rcl}
0,    &      & (\mathbf{x}_{v_{1}}^{(t-1)}, \sum\limits_{\mathclap{u_{1}\in N(v_{1})}}\mathbf{x}_{u_{1}}^{(t-1)}) \neq (\mathbf{x}_{v_{2}}^{(t-1)}, \sum\limits_{\mathclap{u_{2}\in N(v_{2})}}\mathbf{x}_{u_{2}}^{(t-1)})\\
1,    &      & (\mathbf{x}_{v_{1}}^{(t-1)}, \sum\limits_{\mathclap{u_{1}\in N(v_{1})}}\mathbf{x}_{u_{1}}^{(t-1)}) = (\mathbf{x}_{v_{2}}^{(t-1)}, \sum\limits_{\mathclap{u_{2}\in N(v_{2})}}\mathbf{x}_{u_{2}}^{(t-1)})
\end{array} \right.
\label{eq:wl}
\end{eqnarray}
where $\sum$ is the aggregation operation on the neighboring nodes, and $\phi$ maps the substructure composed of the node and all its neighboring nodes to the node representation in the embedding space.

By contrast, the GNN algorithms \cite{hamilton2017inductive,kipf2016semi} would be less powerful if they might map different substructures into the same location in the embedding space, 
\begin{equation}
\Pr\Big(\phi^{(t)}\big((\mathbf{x}_{v_{1}}^{(t-1)}, \sum\limits_{\mathclap{u_{1}\in N(v_{1})}}\mathbf{x}_{u_{1}}^{(t-1)})\big) = \phi^{(t)}\big((\mathbf{x}_{v_{2}}^{(t-1)}, \sum\limits_{\mathclap{u_{2}\in N(v_{2})}}\mathbf{x}_{u_{2}}^{(t-1)})\big)\Big) >0. 
\label{eq:gnn}
\end{equation}

The \#GNN algorithm could map two substructures in the graph into the same location with the probability of their similarity,
\begin{equation}
\Pr\Big(\hbar^{(t)}\big((\mathbf{x}_{v_{1}}^{(t-1)}, \bigcup\limits_{\mathclap{u_{1}\in N(v_{1})}}\mathbf{x}_{u_{1}}^{(t-1)})\big) = \hbar^{(t)}\big((\mathbf{x}_{v_{2}}^{(t-1)}, \bigcup\limits_{\mathclap{u_{2}\in N(v_{2})}}\mathbf{x}_{u_{2}}^{(t-1)})\big)\Big) = Sim^{(t)}(v_1, v_2). 
\label{eq:gnn}
\end{equation}


\subsubsection{Concentration}

Based on the Markov chain composed of a sequence of the attributed network states derived from Algorithm \ref{alg:gnn-hash}, i.e., $\mathbf{V}=(\mathbf{V}^{(0)}, \mathbf{V}^{(1)}, \cdots, \mathbf{V}^{(t)})$, and $\{\mathbf{V}\}$ is a set of all possible values of $\mathbf{V}$, we can show a highly-concentrated estimator of $Sim^{(t)}(v_1, v_2)$.


\begin{lemma}
\label{lemma:lipschitz}
Given a Markov chain derived from Algorithm \ref{alg:gnn-hash}, $\mathbf{V}=(\mathbf{V}^{(0)}, \mathbf{V}^{(1)}, \cdots, \mathbf{V}^{(t)})$, and a function $\varphi(\mathbf{V})=H(\x_{v_1}^{(t)},\x_{v_2}^{(t)})$, where $H(\x_{v_1}^{(t)},\x_{v_2}^{(t)})\in [0,1]$ is the Hamming similarity of any two nodes $v_1$ and $v_2$ in $\mathbf{V}^{(t)}$, it concludes that there definitely exists some $c$ such that $\varphi$ is the $c$-Lipschitz function w.r.t the normalized Hamming distance.
\end{lemma}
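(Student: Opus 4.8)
The plan is to produce $c$ explicitly by a bounded-differences argument rather than appealing only to finiteness of the state space. The two structural facts I would isolate first are: (i) $\varphi(\mathbf{V}) = H(\x_{v_1}^{(t)},\x_{v_2}^{(t)}) = \frac{1}{K}\sum_{i=1}^{K}\mathbbm{1}(x_{v_1,i}^{(t)} = x_{v_2,i}^{(t)})$ is an arithmetic mean of $K$ indicator terms, so it is bounded with $\varphi(\mathbf{V}) \in [0,1]$; and (ii) $\varphi$ reads off nothing except the two rows $\x_{v_1}^{(t)}$ and $\x_{v_2}^{(t)}$ of the terminal state $\mathbf{V}^{(t)}$, hence it is constant in all earlier coordinates $\mathbf{V}^{(0)},\dots,\mathbf{V}^{(t-1)}$ of the path.

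Next I would fix two realizations $\mathbf{V}$ and $\tilde{\mathbf{V}}$ of the chain and bound $|\varphi(\mathbf{V}) - \varphi(\tilde{\mathbf{V}})|$ term by term. Using fact (i),
\begin{equation*}
|\varphi(\mathbf{V}) - \varphi(\tilde{\mathbf{V}})| \le \frac{1}{K}\sum_{i=1}^{K}\Big|\mathbbm{1}(x_{v_1,i}^{(t)} = x_{v_2,i}^{(t)}) - \mathbbm{1}(\tilde{x}_{v_1,i}^{(t)} = \tilde{x}_{v_2,i}^{(t)})\Big|,
\end{equation*}
where each summand is at most $1$ and vanishes unless the $i$-th entry of $\x_{v_1}^{(t)}$ or of $\x_{v_2}^{(t)}$ disagrees with its counterpart in $\tilde{\mathbf{V}}$. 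Thus the right-hand side is at most $1/K$ times the number of disagreeing entries in the two tracked rows of the terminal state, which by fact (ii) is the only place a disagreement can influence $\varphi$. Bounding this in-row count by the total number of disagreeing positions along the path and converting the latter into the normalized Hamming distance $d_H(\mathbf{V},\tilde{\mathbf{V}})$ (by multiplying by the normalizing factor built into $d_H$) yields an inequality $|\varphi(\mathbf{V}) - \varphi(\tilde{\mathbf{V}})| \le c\, d_H(\mathbf{V},\tilde{\mathbf{V}})$ with an explicit finite $c$, which is the claim.

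The genuinely delicate point, and where I would spend most of the care, is pinning down the normalization convention inside $d_H$, since it fixes the value of $c$ that then propagates into the concentration inequality this lemma feeds. If positions are counted at the level of whole states $\mathbf{V}^{(s)}$ and normalized by the chain length $t+1$, facts (i) and (ii) give the clean value $c = t+1$: two distinct terminal states force $d_H(\mathbf{V},\tilde{\mathbf{V}}) \ge 1/(t+1)$ while $|\varphi(\mathbf{V}) - \varphi(\tilde{\mathbf{V}})| \le 1$. If instead positions are counted at the level of individual hash values and normalized by the total number $N$ of entries in the path, the same computation yields a constant of order $N/K$. Either way $c$ is finite --- this is guaranteed a priori, since the hash alphabet is finite and so $\{\mathbf{V}\}$ is a finite metric space on which every $[0,1]$-valued function is Lipschitz --- and the substance of the lemma is merely to exhibit the explicit $c$ that renders the downstream concentration bound quantitative.
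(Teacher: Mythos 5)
Your proposal is correct, and its decisive branch is in fact the paper's own proof: the paper defines the metric exactly as your first convention, $d(\mathbf{V}_{1},\mathbf{V}_{2})=\frac{1}{t+1}\sum_{i=0}^{t}\mathbbm{1}(\mathbf{V}_{1}^{(i)}\neq\mathbf{V}_{2}^{(i)})$, and argues precisely as you do there --- $|\varphi(\mathbf{V}_{1})-\varphi(\mathbf{V}_{2})|\leq 1$ since Hamming similarities lie in $[0,1]$, while any two distinct chains satisfy $d(\mathbf{V}_{1},\mathbf{V}_{2})\geq \frac{1}{t+1}$, so $c=t+1$ works. The paper is even cruder than your version: it never uses your fact (ii) that $\varphi$ depends only on the terminal state, since a disagreement \emph{anywhere} in the chain already forces $d\geq\frac{1}{t+1}$; your observation that agreement of the terminal states makes the inequality trivial is a harmless refinement.

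Your per-entry bounded-differences decomposition into the $K$ indicators is sound but goes beyond what the paper attempts, and under the paper's state-level metric it buys nothing: a single disagreeing state contributes only $\frac{1}{t+1}$ to $d$ no matter how many of the $2K$ tracked entries flip, so $\varphi$ can still move by up to $1$ against that same $\frac{1}{t+1}$, and $c=t+1$ is essentially unimprovable there. The finer constant of order $N/K$ you derive applies only under the entry-level normalization, which is not the metric the lemma (or the downstream Theorem invoking \cite{kontorovich2008concentration}) uses; one small caveat in that branch is that $\mathbf{V}^{(0)}$ has rows of length $|A|$ rather than $K$, so $N$ must account for the inhomogeneous coordinate sizes. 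Your closing remark that finiteness of the state space makes every $[0,1]$-valued function Lipschitz a priori is also correct and is, in substance, all the paper's proof actually establishes.
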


\begin{proof}
Considering any two different instances of the Markov chain, $\mathbf{V}_{1}=(\mathbf{V}_{1}^{(0)}, \mathbf{V}_{1}^{(1)}, \cdots, \mathbf{V}_{1}^{(t)}), \mathbf{V}_2 =(\mathbf{V}_{2}^{(0)}, \mathbf{V}_{2}^{(1)}, \cdots, \mathbf{V}_{2}^{(t)}) \in \{\mathbf{V}\}$, we have $\varphi(\mathbf{V}_{1})=H_{1}(\x_{v_1}^{(t)}, \x_{v_2}^{(t)})$ and $\varphi(\mathbf{V}_{2}) =H_{2}(\x_{v_1}^{(t)}, \x_{v_2}^{(t)})$, where $H_{1}(\x_{v_1}^{(t)}, \x_{v_2}^{(t)}), H_{2}(\x_{v_1}^{(t)}, \x_{v_2}^{(t)})\in [0,1]$ are the Hamming similarities of any two nodes $v_1$ and $v_2$ in $\mathbf{V}_{1}^{(t)}$ and $\mathbf{V}_2^{(t)}$, respectively. The normalized Hamming distance between $\mathbf{V}_{1}$ and $\mathbf{V}_2$ is $d(\mathbf{V}_{1}, \mathbf{V}_2) = \frac{1}{t+1}\sum\nolimits_{i=0}^{t}\mathbbm{1}(\mathbf{V}_{1}^{(i)}\neq \mathbf{V}_{2}^{(i)})$. First, we have $|\varphi(\mathbf{V}_{1}) -\varphi(\mathbf{V}_{2})| = |H_{1}(\x_{v_1}^{(t)},\\ \x_{v_2}^{(t)}) -H_{2}(\x_{v_1}^{(t)}, \x_{v_2}^{(t)})|\leq 1$ due to $H_{1}(\x_{v_1}^{(t)}, \x_{v_2}^{(t)}), H_{2}(\x_{v_1}^{(t)}, \x_{v_2}^{(t)})\in[0,1]$. Second, let $|\mathbf{V}_{1}-\mathbf{V}_{2}|$ denote the normalized Hamming distance, and we have $|\mathbf{V}_{1}-\mathbf{V}_{2}|=d(\mathbf{V}_{1}, \mathbf{V}_{2})\leq 1$. Naturally, there is definitely some $c$ (e.g., $c=t+1$) such that $|\varphi(\mathbf{V}_{1})-\varphi(\mathbf{V}_{2})|\leq c|\mathbf{V}_{1}-\mathbf{V}_{2}|$ because of  $\sum\nolimits_{i=0}^{t}\mathbbm{1}(\mathbf{V}_{1}^{(i)}\neq \mathbf{V}_{2}^{(i)})\geq 1$ when $\mathbf{V}_{1}\neq \mathbf{V}_{2}$. $|\varphi(\mathbf{V}_{1})-\varphi(\mathbf{V}_{2})|\leq c|\mathbf{V}_{1}-\mathbf{V}_{2}|$ evidently holds when $\mathbf{V}_{1}=\mathbf{V}_{2}$. The conclusion has been drawn.
\end{proof}

\begin{theorem}
Given a sequence of Hamming similarities between any two nodes $v_1$ and $v_2$, which are derived from Algorithm \ref{alg:gnn-hash}, $H_{v_1,v_2}=(H(\x_{v_1}^{(0)}, \x_{v_2}^{(0)}), H(\x_{v_1}^{(1)}, \x_{v_2}^{(1)}), \cdots, H(\x_{v_1}^{(t)}, \x_{v_2}^{(t)}))$. Then for some $c>0$, a specified value $M_t$ and any $\epsilon>0$, we have
    \begin{equation}
        \Pr[|H(\x_{v_1}^{(t)}, \x_{v_2}^{(t)}) - Sim^{(t)}(v_1, v_2)|\geq \epsilon] \leq 2\exp (-\frac{t\epsilon^2}{2c^2M_t^2})\nonumber
    \end{equation}
    \label{theo:bound}
\end{theorem}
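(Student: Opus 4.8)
The plan is to read Theorem~\ref{theo:bound} as a concentration inequality for a Lipschitz function of a Markov chain and to prove it by the martingale method for dependent random variables, following the Kontorovich--Ramanan approach. Lemma~\ref{lemma:lipschitz} already supplies the crucial ingredient: $\varphi(\mathbf{V})=H(\x_{v_1}^{(t)},\x_{v_2}^{(t)})$ is $c$-Lipschitz with respect to the \emph{normalized} Hamming distance $d(\mathbf{V}_{1},\mathbf{V}_{2})=\frac{1}{t+1}\sum_{i=0}^{t}\mathbbm{1}(\mathbf{V}_{1}^{(i)}\neq\mathbf{V}_{2}^{(i)})$. Because of the $\frac{1}{t+1}$ normalization, the per-coordinate (unnormalized) bounded differences are $c_i=c/(t+1)$, so that $\sum_{i=0}^{t}c_i^2=c^2/(t+1)$.

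First I would form the Doob martingale adapted to the chain, $Z_i=\mathbb{E}[\varphi(\mathbf{V})\mid \mathbf{V}^{(0)},\dots,\mathbf{V}^{(i)}]$ for $i=0,\dots,t$, so that $Z_t=\varphi(\mathbf{V})=H(\x_{v_1}^{(t)},\x_{v_2}^{(t)})$ and, by Eq.~(\ref{eq:similarity}), $\mathbb{E}[Z_i]=Sim^{(t)}(v_1,v_2)$. The deviation $\varphi(\mathbf{V})-Sim^{(t)}(v_1,v_2)$ is then the telescoping sum $\sum_i (Z_i-Z_{i-1})$ of the martingale increments, and the whole problem reduces to an Azuma--Hoeffding estimate for this martingale.

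The heart of the argument is to control each increment $|Z_i-Z_{i-1}|$. In the independent case this is immediate and yields McDiarmid's inequality directly; here the states $\mathbf{V}^{(i)}$ form a Markov chain (established in Section~\ref{sec:algo}), so I would instead couple two copies of the chain that agree on the first $i$ coordinates and invoke the chain's mixing coefficients $\bar\eta_{ij}$ to measure how the conditional law of the later states depends on $\mathbf{V}^{(i)}$. Assembling these into the upper-triangular mixing matrix $\Delta_t$ with $(\Delta_t)_{ii}=1$ and $(\Delta_t)_{ij}=\bar\eta_{ij}$ for $i<j$, and setting $M_t=\|\Delta_t\|_\infty$ (the maximum row sum), the Kontorovich--Ramanan estimate gives $|Z_i-Z_{i-1}|\le c_i M_t$. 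Azuma--Hoeffding then yields
\[
\Pr\big[|\varphi(\mathbf{V})-\mathbb{E}\varphi(\mathbf{V})|\ge\epsilon\big]\le 2\exp\Big(-\frac{\epsilon^2}{2M_t^2\sum_{i=0}^{t}c_i^2}\Big)=2\exp\Big(-\frac{(t+1)\epsilon^2}{2c^2M_t^2}\Big),
\]
which is exactly the claimed bound up to the harmless $t$ versus $t+1$ counting of the $t+1$ chain states.

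The main obstacle is the increment bound $|Z_i-Z_{i-1}|\le c_i M_t$. Unlike the independent setting, the Markov dependence means that conditioning on $\mathbf{V}^{(i)}$ biases the law of \emph{all} subsequent states, so a naive bounded-differences argument fails; the coupling and mixing-coefficient machinery of Kontorovich--Ramanan is precisely what converts the Lipschitz constant $c_i$ into the corrected constant $c_i\|\Delta_t\|_\infty$, and it is here that the ``specified value'' $M_t$ in the statement must be identified with $\|\Delta_t\|_\infty$. Everything else---the martingale construction, the telescoping, and the final Azuma--Hoeffding step---is routine once this increment bound is in place.
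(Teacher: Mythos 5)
Your proposal is correct and takes essentially the same route as the paper: the paper's proof likewise combines Lemma~\ref{lemma:lipschitz} with the Kontorovich--Ramanan concentration inequality (their Theorem~1.2), with $M_t$ taken from their Eqs.~(1.6)--(1.8), which is exactly the $\|\Delta_t\|_{\infty}$ you identify. The only difference is that you unpack the Doob-martingale/coupling proof of that cited theorem (and correctly flag the harmless $t$ versus $t+1$ counting), whereas the paper invokes it as a black box.
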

\begin{proof}
Considering a Markov chain yielded by Algorithm \ref{alg:gnn-hash}, $\mathbf{V}=(\mathbf{V}^{(0)}, \mathbf{V}^{(1)}, \cdots, \mathbf{V}^{(t)})$, and a function $\varphi(\mathbf{V})=H(\x_{v_1}^{(t)},\x_{v_2}^{(t)})$, we can verify from Lemma~\ref{lemma:lipschitz} that $\varphi(\mathbf{V})=H(\x_{v_1}^{(t)},\x_{v_2}^{(t)})$ is $c$-Lipschitz w.r.t. the normalized Hamming distance. $M_t$ is a specific value following Eqs~(1.6-1.8) in \cite{kontorovich2008concentration}. Finally, from Theorem 1.2 in \cite{kontorovich2008concentration} we have 
  \begin{align*}
        & \Pr[|H(\x_{v_1}^{(t)}, \x_{v_2}^{(t)}) - Sim^{(t)}(v_1, v_2)|\geq \epsilon]  \\
   =    & \Pr[|H(\x_{v_1}^{(t)}, \x_{v_2}^{(t)}) - \mathbb{E}[H(\x_{v_1}^{(t)}, \x_{v_2}^{(t)})]|\geq \epsilon]  \\
   =    & \Pr[|\varphi(\mathbf{V}) - \mathbb{E}[\varphi(\mathbf{V})]|\geq \epsilon] \\
  \leq  & 2\exp (-\frac{t\epsilon^2}{2c^2M_t^2}).
  \end{align*}
  The conclusion has been drawn.
\end{proof}

\subsubsection{Complexity:} Let $\overline{\nu}$ be the average degree of the network, $T$ be the number of iterations and $K$ be the size of node embeddings of Algorithm \ref{alg:gnn-hash}. The first inner for loop (Lines 3-5) costs $\mathcal{O}(|V|K)$\footnote{Initially, the nodes are represented as the feature vectors with the length of $|A|$, but in the following, the size of the vectors is $K$.}, and the second one (Lines 6-9) spends $\mathcal{O}(|V|(K+\overline{\nu}))$, so the overall time complexity is $\mathcal{O}(TK|V|(K+\overline{\nu}))$. Obviously, \#GNN runs linearly w.r.t. $T$ and $|V|$. In terms of space complexity, Algorithm \ref{alg:gnn-hash} maintains each node embedding at each iteration, i.e., $\mathcal{O}(|V|K)$ (Line 8), and requires spaces of $\mathcal{O}(|V|)$ to summarize the whole network (Lines 3-5). Therefore, the overall space complexity is $\mathcal{O}(|V|K)$.

\subsection{Discussion}

The one that is the most relevant to the proposed \#GNN algorithm is NetHash because they both exploit MinHash to hierarchically sketch the node itself and its high-order neighboring nodes in the attributed network, and their differences lie in 
\begin{itemize}
    \item \#GNN sketches from the node itself to the predefined highest-order neighboring nodes, while NetHash does inversely.
    \item \#GNN updates all node representation in each iteration and makes them shared in the whole network at the next iteration. NetHash has to independently sketch each rooted tree from leaves to the root node in order to generate all node representation. 
    \item \#GNN produces a Markov chain composed of a sequence of attributed network states, which keeps the time complexity growing linearly w.r.t. the number of iterations and shows good scalability when exploring higher-order node proximity. The independent tree sketching adopted by NetHash makes the time complexity exponential with the depth of the rooted trees increasing, and only limited-order node proximity could be captured practically.
\end{itemize}

\section{Experimental Results}
\label{sec:exp}

In this section, we conduct extensive experiments in order to evaluate the performance of the proposed \#GNN algorithm.

\subsection{Datasets}

We conduct the link prediction task on five attributed network data. 

\begin{enumerate}
    \item Twitter~\cite{shen2018flexible}: This dataset consists of 2,511 Twitter users as nodes and 37,154 following relationships as edges. The hashtags specified by users act as attribute information. The number of the attributes is 9,073 and the average degree of the network is 29.59.
    \item Facebook~\cite{leskovec2012learning}: This is an ego-network which was collected from survey participants using the Facebook app. There are 4,039 nodes and 88,234 links in the network. Each node as a Facebook user is described by a 1,403-dimensional feature vector. The average degree of the network is 43.69.
    \item BlogCatalog~\cite{li2015unsupervised}: This dataset from an image and video sharing website consists of 5,196 users as nodes and 171,743 following relationship as edges in the network. The tags of interest specified by users act as attribute information. The number of the attributes is 8,189 and the average degree of the network is 66.11.
    \item Flickr~\cite{li2015unsupervised}: This dataset consists of 7,564 bloggers as nodes and 239,365 following relationship as edges in the network. The keywords in the blogers' blog descriptions are attribute information. The number of the attributes is 12,047 and the average degree of the network is 63.29.
    \item Google+~\cite{shen2018flexible}: This is an ego-network of Google+ users and and the friendship relationship. The network consists of 7,856 nodes and 321,268 edges, and each node is modelled as a 2,024-dimensional feature vector from user profiles. The average degree of the network is 81.79.
\end{enumerate}
The above datasets are summarized in Table \ref{tab:data}.

\begin{table}[t]
\setlength{\abovecaptionskip}{5pt}%
\setlength{\belowcaptionskip}{0pt}%
  \centering
  \caption{Summary of the network datasets.} \label{tab:data}
  \begin{normalsize}
  \fontsize{8pt}{\baselineskip}\selectfont{\begin{tabular}{lrrrr}
  \hline\noalign{\smallskip}
  Data Set     & $|V|$ & $|E|$  & $|A|$ & $\overline{\nu}$\\
  \noalign{\smallskip}\hline\noalign{\smallskip}
  Twitter   & 2,511 &  37,154   &   9,073   & 29.59 \\  
  Facebook  & 4,039 &  88,234   &  1,403        & 43.69 \\
  BlogCatalog & 5,196 & 171,743 & 8,189 & 66.11 \\
  Flickr    & 7,564 & 239,365   & 12,047        & 63.29      \\
  Google+  &  7,856  & 321,268  & 2,024 & 81.79  \\
  \noalign{\smallskip}\hline
  \end{tabular}
  }
  \end{normalsize}
  \begin{flushleft}
    \normalsize{$|V|$: number of nodes, $|E|$: number of edges, $\overline{\nu}$: average value of node degrees, $|A|$: size of attribute set.}
  \end{flushleft}
\end{table}

\subsection{Experimental Preliminaries}
\label{subsec:pre}

Five state-of-the-art link prediction methods, which can preserve both attribute and structure information in the attributed network, are compared in our experiments. In order to evaluate the peak performance of all the algorithms and thus to make our comparisons fair, we configure the key parameter settings of each algorithm on each data via grid search in our experiments. For the similarity-based algorithms (i.e., the hashing-based algorithms), the number of dimensions for node representation, $K$, is set to be 200, which is a common practice used in attributed network embedding \cite{yang2015network,tu2017cane,wu2018efficient}. 
Additionally, all the hashing-based algorithms and SEAL employ a parameter to control the order of the neighboring nodes which are captured by the algorithms, and we denote $T$ as the corresponding parameter for the above algorithms\footnote{We would like to note that, in the original literature of the algorithms, the parameter to control the order of the neighboring nodes is denoted as different notations. In this work, in order to avoid ambiguity, such a parameter is denoted as $T$.}. For SEAL and P-GNN, we adopt the hyper-parameter settings, which are recommended by their authors. Particularly, if the algorithms cannot converge within the recommended number of the epochs, we would employ a larger value, i.e., 10,000. We would like to note that those algorithms with the larger epoch numbers might not converge as well. If this is the case, we will report the result of the corresponding algorithms with the recommended hyper-parameter setting of epoch number. 


\begin{enumerate}
    
    \item \textbf{BANE}~\cite{yang2018binarized}: It learns binary hash codes for the attributed network by adding the binary node representation constraint on the Weisfeiler-Lehman matrix factorization. Following \cite{yang2018binarized}, we set the regularization parameter $\alpha$ to the recommended value 0.001, and then test $T\in\{1,2,3,4,5\}, \gamma\in\{0.1, 0.2, \cdots, 0.8, 0.9\}$, and obtain $T=5, \gamma=0.4$ on Twitter, $T=4, \gamma=0.5$ on Facebook, $T=2, \gamma=0.9$ on BlogCatalog, $T=3, \gamma=0.8$ on Flickr and $T=5, \gamma=0.6$ on Google+. 
    
    \item \textbf{LQANR}~\cite{yang2019low}: It learns low-bit hash codes and the layer aggregation weights under the low-bit quantization constraint based on matrix factorization. Following \cite{yang2019low}, we set the regularization parameter $\beta$ to the recommended value 0.001, and then test $T\in\{1,2,3,4,5\}, r\in\{1,1.1,1.2,\cdots,9.9, 10\}$, and acquire $T=5, r=10$ on Twitter, $T=4, r=2.7$ on Facebook, $T=1, r=1.7$ on BlogCatalog, $T=1, r=9.7$ on Flickr and $T=4, r=8.9$ on Google+.

    \item \textbf{NetHash}~\cite{wu2018efficient}: It encodes attribute and structure information of each node by exploiting the randomized hashing technique to recursively sketch the tree rooted at the node. We test $T\in\{1,2,3,4,5\}$, and have $T=2$ on Twitter, $T=1$ on Facebook, BlogCatalog, Flickr and Google+. Besides, their entropies are 4.32, 4.69, 4.89, 4.81 and 5.30, respectively. NetHash runs out of memory on Twitter and Facebook when $T>4$, and on BlogCatalog, Flickr and Google+ when $T>3$.
    

    
    \item \textbf{SEAL}~\cite{zhang2018link}: It is a GNN-based link prediction method which captures both attribute and structure information by extracting the variable-sized subgraphs composed of $T$-hop neighboring nodes. Following the suggestions of the author\footnote{\url{https://github.com/muhanzhang/SEAL/tree/master/Python}}, we tune \texttt{-{}-max-train-num} on $\{1,000, 10,000, 100,000\}$ and \texttt{-{}-max-\\nodes-per-hop} on $\{10, 100, None\}$, and finally set \texttt{-{}-max-train-\\num} to $100,000$ and \texttt{-{}-max-nodes-per-hop} to $None$ (i.e., the recommended value) on Twitter and Facebook; \texttt{-{}-max-train-num} to $10,000$ and \texttt{-{}-max-nodes-per-hop} $10$ on BlogCatalog and Google+; \texttt{-{}-max-train-num} to $1,000$ and \texttt{-{}-max-nodes-per-hop} $10$ on Flickr, because the algorithm would run out of memory when picking up the larger values. We test $T\in\{1,2,3,4,5\}$, and have $T=1$ on Twitter, Facebook, BlogCatalog and Flickr, and $T=2$ on Google+. SEAL runs out of memory on Twitter and Facebook when $T>1$, and on BlogCatalog and Flickr when $T>2$. 
    
    \item \textbf{P-GNN}~\cite{you2019position}: The GNN-based algorithm preserves attribute and structure information and further improves performance by incorporating node position information w.r.t. all other nodes in the network. We test the number of layers $L\in\{1,2\}$ and the number of the hops of the shortest path distance $k\in\{-1, 2\}$ where -1 means the exact shortest path. Finally, we have $L=2$ and $k=-1$. 
    
    \item \textbf{\#GNN}: It is our proposed algorithm. We test $T\in\{1,2,3,4,5\}$, and have $T=3$ on Twitter, $T=3$ on Facebook, $T=4$ on BlogCatalog, $T=1$ on Flickr and $T=5$ on Google+.
\end{enumerate}

The executable programs of all the competitors are kindly provided by their authors. The hashing-based algorithms return each node representation by embedding nodes into $l_1$ (Hamming) space. Suppose the hashing-based algorithms generate $\x_{v_1}$ and $\x_{v_2}$, which are the embeddings with the length of $K$, for two nodes $v_1$ and $v_2$, respectively, the similarity between $v_1$ and $v_2$ is defined as
\begin{equation}
    Sim(\x_{v_1},\x_{v_2}) = \frac{\sum_{k=1}^{K}\mathbbm{1}(x_{v_1,k} = x_{v_2,k})}{K},
    \label{eq:sim}
\end{equation}
where $\mathbbm{1}(state) = 1$ if $state$ is true, and $\mathbbm{1}(state) = 0$ otherwise. All the experiments are conducted on Linux with NVIDIA Tesla GPU (11GB RAM), 2.60 GHz Intel 12-Core Xeon CPUs and 220GB RAM. 


We have released the datasets and the source code of the \#GNN algorithm in \url{https://github.com/williamweiwu/williamweiwu.github.io/tree/master/Graph_Network\%20Embedding/HashGNN}.

\subsection{Link Prediction}
\label{subsec:prediction}

\begin{table*}[t]
\setlength{\abovecaptionskip}{5pt}%
\setlength{\belowcaptionskip}{0pt}%
  \centering
  \caption{Link prediction performance results} \label{tab:prediction}
  \begin{normalsize}
  \fontsize{8pt}{\baselineskip}\selectfont{\begin{tabular}{llrrrrrrrrrr}
  \hline\noalign{\smallskip}
   \multirow{4}*{\textbf{Data}} & \multirow{4}*{\textbf{Algorithms}} &  \multicolumn{10}{c}{\textbf{Training Ratio}} \\ \cmidrule(lr){3-12}
  \noalign{\smallskip}
  \multirow{3}*{} &\multirow{3}*{} &  \multicolumn{2}{c}{50\%} & \multicolumn{2}{c}{60\%} & \multicolumn{2}{c}{70\%} & \multicolumn{2}{c}{80\%} & \multicolumn{2}{c}{90\%}\\  \cmidrule(lr){3-4} \cmidrule(lr){5-6} \cmidrule(lr){7-8} \cmidrule(lr){9-10} \cmidrule(lr){11-12}
  \multirow{3}*{} &\multirow{3}*{} &  \tabincell{c}{AUC (\%)}  &  \tabincell{c}{Runtime (s)}   & \tabincell{c}{AUC (\%)}   &  \tabincell{c}{Runtime (s)}  & \tabincell{c}{AUC (\%)}  & \tabincell{c}{Runtime (s)} & \tabincell{c}{AUC (\%)}   &  \tabincell{c}{Runtime (s)}     & \tabincell{c}{AUC (\%)} &  \tabincell{c}{Runtime (s)}  \\
  \noalign{\smallskip}\hline\noalign{\smallskip}
  \multirow{6}*{Twitter} & BANE & \textbf{98.17} & 90.28 & \textbf{98.31} & 97.92 & \textbf{98.46} & 106.39 & 98.53 & 112.84 & 98.67 & 116.61 \\ 
& LQANR & 96.85 & 38.25 & 97.53 & 39.72 & 97.67 & 40.41 & 97.89 & 43.63 & 97.98 & 44.35 \\ 
& NetHash & 91.10 & 1.52 & 90.95 & 2.08 & 90.69 & 2.71 & 91.28 & 3.48 & 91.54 & 4.37 \\ 
& SEAL & 95.82 & 3229.18 & 96.18 & 3609.05 & 96.44 & 3998.78 & 96.60 & 4346.25 & 96.82 & 4701.18 \\ 
& P-GNN & 95.64 & 951.30 & 95.73 & 956.47 & 95.14 & 964.40 & 96.13 & 973.98 & 96.33 & 980.70 \\ 
& \#GNN & 97.96 & 2.64 & 98.24 & 2.87 & 98.41 & 2.99 & \textbf{98.57} & 3.07 & \textbf{98.82} & 3.25 \\ 
  \noalign{\smallskip}\hline\noalign{\smallskip}
  \multirow{6}*{Facebook}& BANE & \textbf{98.10} & 64.08 & \textbf{98.22} & 69.19 & \textbf{98.31} & 70.06 & \textbf{98.26} & 72.39 & 98.29 & 76.65 \\ 
& LQANR & 97.27 & 31.71 & 97.24 & 33.38 & 97.33 & 34.76 & 97.30 & 35.28 & 97.29 & 35.72 \\ 
& NetHash & 94.41 & 0.35 & 95.17 & 0.40 & 96.52 & 0.45 & 97.02 & 0.49 & 97.40 & 0.54 \\ 
& SEAL & 95.54 & 2457.20 & 95.83 & 2604.39 & 95.92 & 2740.08 & 96.07 & 2857.41 & 96.21 & 2969.81 \\ 
& P-GNN & 94.89 & 1197.08 & 93.73 & 1214.99 & 94.90 & 1230.97 & 94.54 & 1244.26 & 94.69 & 1253.85 \\ 
& \#GNN & 97.93 & 3.99 & 98.18 & 4.05 & 98.23 & 4.25 & \textbf{98.26} & 4.45 & \textbf{98.42} & 4.54 \\ 
  \noalign{\smallskip}\hline\noalign{\smallskip}
  \multirow{6}*{\tabincell{c}{Blog\\-Catalog}} 
                        & BANE & \textbf{75.95} & 254.59 & 76.63 & 262.27 & 76.71 & 273.54 & 77.42 & 280.61 & \textbf{77.26} & 296.34 \\ 
& LQANR & 65.54 & 64.60 & 65.30 & 66.52 & 64.53 & 67.76 & 64.72 & 71.45 & 64.34 & 73.06 \\ 
& NetHash & 68.82 & 1.44 & 71.04 & 1.63 & 71.38 & 1.83 & 71.30 & 2.04 & 73.21 & 2.25 \\ 
& SEAL & 61.95 & 2255.61 & 62.35 & 2500.85 & 62.46 & 2864.73 & 62.84 & 3219.18 & 63.15 & 3553.29 \\ 
& P-GNN & 71.95 & 11005.72 & \textbf{77.52} & 11928.24 & \textbf{77.06} & 13095.90 & \textbf{77.82} & 14585.58 & 75.23 & 19210.06 \\ 
& \#GNN & 70.10 & 9.16 & 71.52 & 9.24 & 71.87 & 9.46 & 71.53 & 9.69 & 72.96 & 10.15 \\
  \noalign{\smallskip}\hline\noalign{\smallskip}            
  \multirow{6}*{Flickr} & BANE & 77.74 & 342.18 & 78.21 & 374.25 & 79.08 & 394.32 & 79.35 & 419.36 & 79.32 & 428.57 \\ 
& LQANR & 74.29 & 75.03 & 69.55 & 75.67 & 66.05 & 78.83 & 63.98 & 87.64 & 62.27 & 91.78 \\ 
& NetHash & 84.64 & 1.11 & 84.72 & 1.27 & 85.59 & 1.46 & 85.17 & 1.63 & 85.06 & 1.80 \\ 
& SEAL & 59.12 & 557.81 & 60.45 & 1053.82 & 61.33 & 1264.72 & 62.48 & 1887.83 & 62.62 & 2411.04 \\ 
& P-GNN & \textbf{85.59} & 17537.72 & \textbf{87.00} & 18027.87 & \textbf{87.10} & 19200.07 & \textbf{87.19} & 19444.35 & \textbf{86.79} & 19584.91 \\ 
& \#GNN & 85.42 & 1.50 & 85.78 & 1.53 & 86.17 & 1.64 & 86.37 & 1.76 & 86.33 & 1.87 \\ 
  \noalign{\smallskip}\hline\noalign{\smallskip}            
  \multirow{6}*{Google+} & BANE & 83.74 & 190.76 & 83.77 & 236.59 & 84.12 & 240.34 & 83.89 & 259.90 & 84.04 & 276.92 \\ 
& LQANR & 82.91 & 25.10 & 82.66 & 26.26 & 82.42 & 27.54 & 82.77 & 29.10 & 82.69 & 29.28 \\ 
& NetHash & 76.87 & 0.44 & 76.85 & 0.50 & 76.78 & 0.58 & 77.61 & 0.65 & 77.97 & 0.73 \\ 
& SEAL & 85.06 & 2528.79 & 85.99 & 2746.96 & 86.58 & 3723.42 & 87.45 & 4531.50 & 87.97 & 4859.09 \\ 
& P-GNN & 88.97 & 2531.58 & \textbf{90.94} & 2602.67 & 85.18 & 2621.37 & 87.99 & 2987.09 & 86.04 & 3010.92 \\ 
& \#GNN & \textbf{89.80} & 14.97 & 90.03 & 15.89 & \textbf{90.17} & 16.78 & \textbf{89.76} & 17.23 & \textbf{90.74} & 17.84 \\  
  \noalign{\smallskip}\hline
  \end{tabular}
  }
  \end{normalsize}
\end{table*}

In the link prediction task, we randomly preserve a training ratio of edges (i.e., $\{50\%, 60\%, 70\%, 80\%, 90\%\}$) as a training network, and use the deleted edges as testing links. The compared hashing-based algorithms and the \#GNN algorithm acquire the node representation based on the training network and then rank testing links and nonexistent ones in terms of similarity between each pair of nodes computed by Eq.~(\ref{eq:sim}); the GNN-based algorithms conduct the task in an end-to-end manner. We adopt AUC to measure the performance in terms of accuracy, which reflects the probability that a random selected testing link ranks above a random selected nonexistent one. We set a cutoff time of 24 hours for each algorithm, and report the average AUC from 5 repeated trials in each setting.

Table \ref{tab:prediction} reports the experimental results in terms of accuracy and end-to-end runtime. Generally, \#GNN competes well with BANE and LQANR and even defeat them in some cases, and outperforms NetHash in terms of accuracy. This illustrates that the GNN framework enables \#GNN to powerfully propagate and capture information without fitting the specific data distribution; and also it is better than the independent tree sketching.
Similarly, \#GNN achieves the accuracy comparable to SEAL and P-GNN. In some cases, SEAL and P-GNN are even inferior, possibly because they cannot converge given the settings of epochs, even though they use a larger value (i.e., 10,000). Particularly, SEAL performs worst on BlogCatalog and Flickr. This is not surprising because it captures insufficient information in the settings of the smaller \texttt{-{}-max-train-num} and  \texttt{-{}-max-nodes-per-hop}. By contrast, SEAL weakens the above adverse effects by capturing the beneficial 2nd-order neighboring nodes on Google+.
Besides, NetHash and SEAL practically explore limited-order neighboring nodes because rooted trees in NetHash and subgraphs in SEAL expand dramatically when $T$ increases, as mentioned in the parameter settings. It is worth noting that the number of attributes also impacts SEAL --- although Google+ has the most nodes and edges, it has smaller size of the attribute set, and thus SEAL can explore higher-order neighboring nodes; Flickr has the largest number of attributes, so SEAL has to use the smallest \texttt{-{}-max-train-num}.

In terms of runtime, \#GNN performs much faster than the learning-based algorithms and particularly, by $2\sim 4$ orders of magnitudes faster than SEAL and P-GNN, because the randomized hashing approach does not need learning. SEAL runs more quickly on Flickr than other datasets, mainly because \texttt{-{}-max-train-num} is smaller on Flickr (i.e., 1,000) than other datasets (i.e., 10,000 or 100,000). P-GNN performs more slowly on BlogCatalog and Flickr due to a larger epoch number (i.e., 10,000). NetHash generally runs faster than \#GNN, largely because NetHash achieves the best performance when $T\in\{1,2\}$ while \#GNN commonly explores higher-order neighboring nodes. Exceptionally, \#GNN runs more quickly on Twitter than NetHash in the cases of $\{80\%,90\%\}$, although the former captures up to 3-order proximity while the latter does just 2-order proximity, because the exponentially increased time from $T=1$ to $T=2$ in NetHash surpasses the linearly increased time from $T=1$ to $T=3$ in \#GNN in the denser training networks.
Particularly, NetHash performs slightly faster than \#GNN on Flickr because the two algorithms both peaks at $T=1$ and \#GNN spends a little more time in summarizing the network.

\subsection{Embedding Efficiency}
\label{sec:embedding_efficiency}

\begin{table*}[t]
\setlength{\abovecaptionskip}{5pt}%
\setlength{\belowcaptionskip}{0pt}%
  \centering
  \caption{Embedding runtime of the hashing-based algorithms} \label{tab:embedding}
  \begin{normalsize}
  \fontsize{8pt}{\baselineskip}\selectfont{\begin{tabular}{llrrrrr}
  \hline\noalign{\smallskip}
   \multirow{5}*{\textbf{Data}} & \multirow{5}*{\textbf{Algorithms}} &  \multicolumn{5}{c}{\textbf{Training Ratio}} \\ \cmidrule(lr){3-7}
  \noalign{\smallskip}
  \multirow{3}*{} &\multirow{3}*{} &  50\% & 60\% & 70\% & 80\% & 90\%\\  \cmidrule(lr){3-3} \cmidrule(lr){4-4} \cmidrule(lr){5-5} \cmidrule(lr){6-6} \cmidrule(lr){7-7}
  \noalign{\smallskip}
  \multirow{3}*{} &\multirow{3}*{} & \tabincell{c}{Embedding \\ Time (s)}   & \tabincell{c}{Embedding \\ Time (s)}  &  \tabincell{c}{Embedding \\ Time (s)} & \tabincell{c}{Embedding \\ Time (s)}  &  \tabincell{c}{Embedding \\ Time (s)}  \\
  \noalign{\smallskip}\hline\noalign{\smallskip}
  \multirow{4}*{Twitter} & BANE & 90.20 & 97.84 & 106.31 & 112.76 & 116.54 \\ 
& LQANR & 38.17 & 39.64 & 40.34 & 43.55 & 44.28 \\ 
& NetHash & 1.43 & 1.99 & 2.62 & 3.39 & 4.28 \\ 
& \#GNN & 2.55 & 2.79 & 2.90 & 2.98 & 3.17 \\ 
  \noalign{\smallskip}\hline\noalign{\smallskip}
  \multirow{4}*{Facebook} & BANE & 64.00 & 69.11 & 69.98 & 72.31 & 76.57 \\ 
& LQANR & 31.63 & 33.30 & 34.69 & 35.21 & 35.65 \\ 
& NetHash & 0.27 & 0.32 & 0.36 & 0.41 & 0.46 \\ 
& \#GNN & 3.92 & 3.97 & 4.17 & 4.35 & 4.45 \\   \noalign{\smallskip}\hline\noalign{\smallskip}
  \multirow{4}*{\tabincell{c}{Blog\\-Catalog}} 
& BANE & 254.49 & 262.18 & 273.45 & 280.52 & 296.25 \\ 
& LQANR & 64.51 & 66.43 & 67.67 & 71.36 & 72.98 \\ 
& NetHash & 1.34 & 1.54 & 1.73 & 1.95 & 2.16 \\ 
& \#GNN & 9.07 & 9.14 & 9.37 & 9.59 & 10.07 \\  \noalign{\smallskip}\hline\noalign{\smallskip}                  
  \multirow{4}*{Flickr}& BANE & 342.09 & 374.16 & 394.24 & 419.27 & 428.48 \\ 
& LQANR & 74.94 & 75.57 & 78.74 & 87.55 & 91.69 \\ 
& NetHash & 1.02 & 1.18 & 1.37 & 1.54 & 1.71 \\ 
& \#GNN & 1.42 & 1.45 & 1.55 & 1.66 & 1.78 \\  \noalign{\smallskip}\hline\noalign{\smallskip}            
  \multirow{4}*{Google+}& BANE & 190.66 & 236.48 & 240.23 & 259.80 & 276.82 \\ 
& LQANR & 24.99 & 26.16 & 27.44 & 28.99 & 29.18 \\ 
& NetHash & 0.34 & 0.41 & 0.48 & 0.55 & 0.63 \\ 
& \#GNN & 14.86 & 15.79 & 16.68 & 17.14 & 17.74 \\   \noalign{\smallskip}\hline
  \end{tabular}
  }
  \end{normalsize}
\end{table*}

The GNN-based link prediction approaches commonly perform the task in an end-to-end manner, while the traditional methods embed each node into the low-dimensional space and then explicitly compute the similarity between the compact node embeddings. Here, we show the embedding efficiency of the hashing-based algorithms.

Table \ref{tab:embedding} reports the embedding time of the hashing-based algorithms in the cases of the above-mentioned training ratios. We observe from Tables \ref{tab:prediction} and \ref{tab:embedding} that the embedding process dominates link prediction in terms of runtime. Therefore, following Table \ref{tab:prediction}, the randomized hashing algorithms (\#GNN and NetHash) shows more efficient embedding process than the learning-based algorithms (BANE and LQANR) in Table \ref{tab:embedding} because the former avoids the complicated matrix factorization operation in BANE and LQANR.

\subsection{Scalability}
\label{subsec:scalability}

As demonstrated in Section \ref{sec:embedding_efficiency}, the efficiency of the similarity-based link prediction algorithms highly depends on the node embedding, and thus we test scalability of the hashing-based algorithms in the embedding process. To this end, we select a large-scale DBLP attributed network with millions of nodes and furthermore, generate node representations on four subnetworks with increasing sizes from $10^3$ to $10^6$ nodes and constant average degree of 20. Because the parameters except $T$ and $K$ cannot impact the embedding time, we test the algorithms under the parameter configuration and $K=200$. We report the average embedding time w.r.t. the number of nodes in the cases of $T\in\{1,2,3,4,5\}$ and w.r.t. $T$ in the original DBLP network. We set a cutoff time of 24 hours for each algorithm.
\begin{enumerate}
    \item DBLP~\cite{tang2008arnetminer}: The original data is a collection of papers from DBLP, which consists of 4,107,340 papers and 36,624,464 citation relationship. We clean up papers without abstracts or citations and build a citation network with 3,273,363 papers being nodes and 32,533,645 citation being edges, where each node uses abstract as attributes. The number of attributes is 770,785 and the average degree of the network is 19.88.
\end{enumerate}

\begin{figure}[t]
\setlength{\abovecaptionskip}{5pt}%
\setlength{\belowcaptionskip}{0pt}%
\centering
\includegraphics[width=\linewidth]{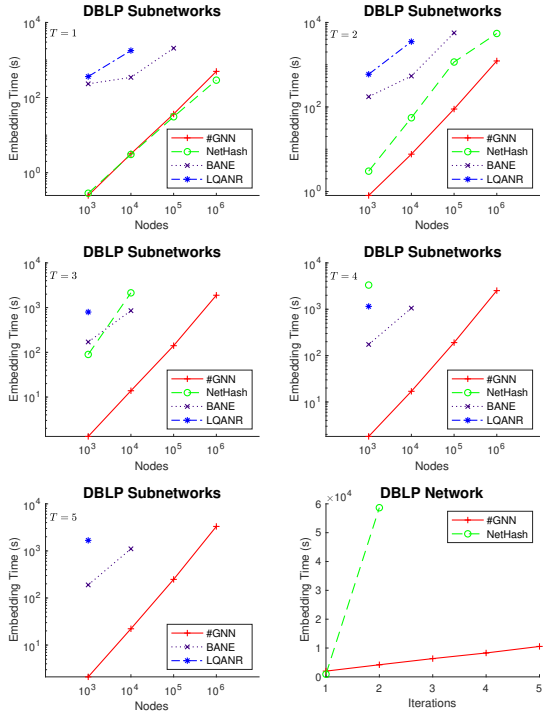}
  \caption{Scalability of the hashing-based algorithms on the DBLP network and its four subnetworks. The first five subplots show scalability w.r.t. the nodes of the subnetworks in the cases of $T\in\{1,2,3,4,5\}$, and the last one shows scalability w.r.t. $T$ on the original network. Note that $x$-axis and $y$-axis are scaled to log in the first five subplots.}
\label{fig:dblp11_scalability}
\end{figure}

In Figure \ref{fig:dblp11_scalability}, we empirically observe that \#GNN scales linearly with the number of nodes in all cases of $T$. \#GNN is able to generate node representation for 1 million nodes in less than 10,000 seconds on the four subnetworks. By contrast, the compared hashing algorithms run out of memory or time in the case that $T$ is beyond a certain threshold. Particularly, NetHash maintains the same level as \#GNN when $T=1$, but performs much more slowly than \#GNN as $T$ increases if it could give the results within the valid time. Furthermore, in the original DBLP network, \#GNN still scales linearly with $T$ increasing, while BANE and LQANR fail, and NetHash runs only two iterations. This illustrates that practically, \#GNN has the capability of exploring high-order neighboring nodes in the large-scale network. The main reason is that the property of the Markov chain enables \#GNN to generate node representation in the next iteration just based on the present node representation.

\subsection{Parameter Sensitivity}

The \#GNN algorithm has two parameters, the number of iterations $T$ and the embedding size $K$, and we study the impact of the two parameters on the link prediction performance in terms of accuracy and end-to-end runtime. We report the results on the five datasets in the case of $90\%$ training ratio in Figure \ref{fig:parameters}.

The accuracy performance is tightly related to the number of iterations $T$ and the embedding size $K$. In most cases, as $T$ and $K$ increases, the accuracy results get improved because the algorithm captures more information. However, the larger $T$ and $K$ might deteriorate the accuracy performance if noise dominates in the message passing process. The phenomenon illustrates that the quality of the node representation produced by the \#GNN algorithm is data-specific and deserves to be appropriately generated by tuning parameters. We observe from Tables \ref{tab:prediction} and \ref{tab:embedding} that the time for similarity computation could be ignored, 
and thus it is not surprising that the end-to-end runtime in the cases of $K$ generally linearly increases w.r.t. $T$.

\begin{figure*}[t]
\setlength{\abovecaptionskip}{5pt}%
\setlength{\belowcaptionskip}{0pt}%
\centering
\includegraphics[width=\linewidth]{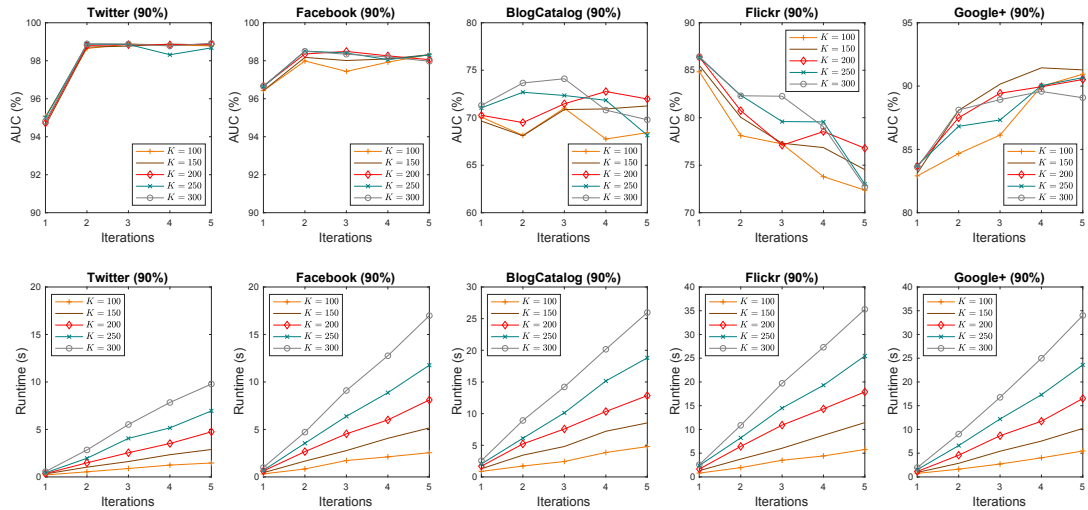}
  \caption{Impact of the number of iteraions $T$ and the embedding size $K$ on the link prediction performance in terms of accuracy (upper subplots) and runtime (lower subplots).}
\label{fig:parameters}
\end{figure*}

\section{Conclusion}
\label{sec:con}

In this paper, we propose an efficient and scalable link prediction algorithm dubbed \#GNN on the attributed network. The approach adopts the randomized hashing technique to avoid massive parameter learning and to accelerate the GNN-based link prediction model, while still preserving as much attribute and structure information as possible. In addition, we characterize the discriminative power of \#GNN in probability. 


We conduct extensive empirical tests of the proposed \#GNN algorithm and the state-of-the-art methods. We evaluate its effectiveness and efficiency on five network datasets and scalability on one large-scale network with millions of nodes. The experimental results show that \#GNN not only shows superiority in efficiency but also competes well with the compared algorithms. Additionally, \#GNN can scale linearly w.r.t. the size of the network and the number of iterations of the algorithm, which makes it more practical in the era of big data.

\section*{Acknowledgment}
This work was supported in part by the Federal Ministry of Education and Research (BMBF), Germany under the project LeibnizKILabor (Grant No.01DD20003), the Sub Project of Independent Scientific Research Project (No. ZZKY-ZX-03-02-04), STCSM Project (20511100400), Shanghai Municipal Science and Technology Major Projects (2018SHZDZX01), and the Program for Professor of Special Appointment (Eastern Scholar) at Shanghai Institutions of Higher Learning.

\bibliographystyle{ACM-Reference-Format}
\bibliography{main}


\begin{thebibliography}{57}


\ifx \showCODEN    \undefined \def \showCODEN     #1{\unskip}     \fi
\ifx \showDOI      \undefined \def \showDOI       #1{#1}\fi
\ifx \showISBNx    \undefined \def \showISBNx     #1{\unskip}     \fi
\ifx \showISBNxiii \undefined \def \showISBNxiii  #1{\unskip}     \fi
\ifx \showISSN     \undefined \def \showISSN      #1{\unskip}     \fi
\ifx \showLCCN     \undefined \def \showLCCN      #1{\unskip}     \fi
\ifx \shownote     \undefined \def \shownote      #1{#1}          \fi
\ifx \showarticletitle \undefined \def \showarticletitle #1{#1}   \fi
\ifx \showURL      \undefined \def \showURL       {\relax}        \fi
\providecommand\bibfield[2]{#2}
\providecommand\bibinfo[2]{#2}
\providecommand\natexlab[1]{#1}
\providecommand\showeprint[2][]{arXiv:#2}

\bibitem[\protect\citeauthoryear{Adamic and Adar}{Adamic and Adar}{2003}]%
        {adamic2003friends}
\bibfield{author}{\bibinfo{person}{Lada~A Adamic} {and} \bibinfo{person}{Eytan
  Adar}.} \bibinfo{year}{2003}\natexlab{}.
\newblock \showarticletitle{{F}riends and {N}eighbors on the {W}eb}.
\newblock \bibinfo{journal}{\emph{Social networks}} \bibinfo{volume}{25},
  \bibinfo{number}{3} (\bibinfo{year}{2003}), \bibinfo{pages}{211--230}.
\newblock


\bibitem[\protect\citeauthoryear{Aggarwal}{Aggarwal}{2011}]%
        {aggarwal2011classification}
\bibfield{author}{\bibinfo{person}{Charu~C Aggarwal}.}
  \bibinfo{year}{2011}\natexlab{}.
\newblock \showarticletitle{{O}n {C}lassification of {G}raph {S}treams}. In
  \bibinfo{booktitle}{\emph{SDM}}. \bibinfo{pages}{652--663}.
\newblock


\bibitem[\protect\citeauthoryear{Broder, Charikar, Frieze, and
  Mitzenmacher}{Broder et~al\mbox{.}}{1998}]%
        {broder1998min}
\bibfield{author}{\bibinfo{person}{Andrei~Z Broder}, \bibinfo{person}{Moses
  Charikar}, \bibinfo{person}{Alan~M Frieze}, {and} \bibinfo{person}{Michael
  Mitzenmacher}.} \bibinfo{year}{1998}\natexlab{}.
\newblock \showarticletitle{{M}in-wise {I}ndependent {P}ermutations}. In
  \bibinfo{booktitle}{\emph{STOC}}. \bibinfo{pages}{327--336}.
\newblock


\bibitem[\protect\citeauthoryear{Charikar}{Charikar}{2002}]%
        {charikar2002similarity}
\bibfield{author}{\bibinfo{person}{Moses~S Charikar}.}
  \bibinfo{year}{2002}\natexlab{}.
\newblock \showarticletitle{{S}imilarity {E}stimation {T}echniques from
  {R}ounding {A}lgorithms}. In \bibinfo{booktitle}{\emph{STOC}}.
  \bibinfo{pages}{380--388}.
\newblock


\bibitem[\protect\citeauthoryear{Duvenaud, Maclaurin, Iparraguirre, Bombarell,
  Hirzel, Aspuru-Guzik, and Adams}{Duvenaud et~al\mbox{.}}{2015}]%
        {duvenaud2015convolutional}
\bibfield{author}{\bibinfo{person}{David~K Duvenaud}, \bibinfo{person}{Dougal
  Maclaurin}, \bibinfo{person}{Jorge Iparraguirre}, \bibinfo{person}{Rafael
  Bombarell}, \bibinfo{person}{Timothy Hirzel}, \bibinfo{person}{Al{\'a}n
  Aspuru-Guzik}, {and} \bibinfo{person}{Ryan~P Adams}.}
  \bibinfo{year}{2015}\natexlab{}.
\newblock \showarticletitle{{C}onvolutional {N}etworks on {G}raphs for
  {L}earning {M}olecular {F}ingerprints}. In \bibinfo{booktitle}{\emph{NIPS}}.
  \bibinfo{pages}{2224--2232}.
\newblock


\bibitem[\protect\citeauthoryear{Gilmer, Schoenholz, Riley, Vinyals, and
  Dahl}{Gilmer et~al\mbox{.}}{2017}]%
        {gilmer2017neural}
\bibfield{author}{\bibinfo{person}{Justin Gilmer}, \bibinfo{person}{Samuel~S.
  Schoenholz}, \bibinfo{person}{Patrick~F. Riley}, \bibinfo{person}{Oriol
  Vinyals}, {and} \bibinfo{person}{George~E. Dahl}.}
  \bibinfo{year}{2017}\natexlab{}.
\newblock \showarticletitle{{N}eural {M}essage {P}assing for {Q}uantum
  {C}hemistry}. In \bibinfo{booktitle}{\emph{ICML}}.
  \bibinfo{pages}{1263--1272}.
\newblock


\bibitem[\protect\citeauthoryear{Grover and Leskovec}{Grover and
  Leskovec}{2016}]%
        {grover2016node2vec}
\bibfield{author}{\bibinfo{person}{Aditya Grover} {and} \bibinfo{person}{Jure
  Leskovec}.} \bibinfo{year}{2016}\natexlab{}.
\newblock \showarticletitle{node2vec: {S}calable {F}eature {L}earning for
  {N}etworks}. In \bibinfo{booktitle}{\emph{KDD}}. \bibinfo{pages}{855--864}.
\newblock


\bibitem[\protect\citeauthoryear{Hamilton, Ying, and Leskovec}{Hamilton
  et~al\mbox{.}}{2017}]%
        {hamilton2017inductive}
\bibfield{author}{\bibinfo{person}{Will Hamilton}, \bibinfo{person}{Zhitao
  Ying}, {and} \bibinfo{person}{Jure Leskovec}.}
  \bibinfo{year}{2017}\natexlab{}.
\newblock \showarticletitle{{I}nductive {R}epresentation {L}earning on {L}arge
  {G}raphs}. In \bibinfo{booktitle}{\emph{NIPS}}. \bibinfo{pages}{1024--1034}.
\newblock


\bibitem[\protect\citeauthoryear{Jiang and Li}{Jiang and Li}{2015}]%
        {jiang2015scalable}
\bibfield{author}{\bibinfo{person}{Qing-Yuan Jiang} {and}
  \bibinfo{person}{Wu-Jun Li}.} \bibinfo{year}{2015}\natexlab{}.
\newblock \showarticletitle{{S}calable {G}raph {H}ashing with {F}eature
  {T}ransformation}. In \bibinfo{booktitle}{\emph{AAAI}}.
  \bibinfo{pages}{2248--2254}.
\newblock


\bibitem[\protect\citeauthoryear{Kearnes, McCloskey, Berndl, Pande, and
  Riley}{Kearnes et~al\mbox{.}}{2016}]%
        {kearnes2016molecular}
\bibfield{author}{\bibinfo{person}{Steven Kearnes}, \bibinfo{person}{Kevin
  McCloskey}, \bibinfo{person}{Marc Berndl}, \bibinfo{person}{Vijay Pande},
  {and} \bibinfo{person}{Patrick Riley}.} \bibinfo{year}{2016}\natexlab{}.
\newblock \showarticletitle{{M}olecular {G}raph {C}onvolutions: {M}oving beyond
  {F}ingerprints}.
\newblock \bibinfo{journal}{\emph{Journal of computer-aided molecular design}}
  \bibinfo{volume}{30}, \bibinfo{number}{8} (\bibinfo{year}{2016}),
  \bibinfo{pages}{595--608}.
\newblock


\bibitem[\protect\citeauthoryear{Kipf and Welling}{Kipf and Welling}{2017}]%
        {kipf2016semi}
\bibfield{author}{\bibinfo{person}{Thomas~N Kipf} {and} \bibinfo{person}{Max
  Welling}.} \bibinfo{year}{2017}\natexlab{}.
\newblock \showarticletitle{{S}emi-{S}upervised {C}lassification with {G}raph
  {C}onvolutional {N}etworks}. In \bibinfo{booktitle}{\emph{ICLR}}.
\newblock


\bibitem[\protect\citeauthoryear{Kontorovich, Ramanan,
  et~al\mbox{.}}{Kontorovich et~al\mbox{.}}{2008}]%
        {kontorovich2008concentration}
\bibfield{author}{\bibinfo{person}{Leonid~Aryeh Kontorovich},
  \bibinfo{person}{Kavita Ramanan}, {et~al\mbox{.}}}
  \bibinfo{year}{2008}\natexlab{}.
\newblock \showarticletitle{{C}oncentration {I}nequalities for {D}ependent
  {R}andom {V}ariables via the {M}artingale {M}ethod}.
\newblock \bibinfo{journal}{\emph{The Annals of Probability}}
  \bibinfo{volume}{36}, \bibinfo{number}{6} (\bibinfo{year}{2008}),
  \bibinfo{pages}{2126--2158}.
\newblock


\bibitem[\protect\citeauthoryear{Koren, Bell, and Volinsky}{Koren
  et~al\mbox{.}}{2009}]%
        {koren2009matrix}
\bibfield{author}{\bibinfo{person}{Yehuda Koren}, \bibinfo{person}{Robert
  Bell}, {and} \bibinfo{person}{Chris Volinsky}.}
  \bibinfo{year}{2009}\natexlab{}.
\newblock \showarticletitle{{M}atrix {F}actorization {T}echniques for
  {R}ecommender {S}ystems}.
\newblock \bibinfo{journal}{\emph{Computer}} \bibinfo{volume}{42},
  \bibinfo{number}{8} (\bibinfo{year}{2009}), \bibinfo{pages}{30--37}.
\newblock


\bibitem[\protect\citeauthoryear{Lei, Jin, Barzilay, and Jaakkola}{Lei
  et~al\mbox{.}}{2017}]%
        {lei2017deriving}
\bibfield{author}{\bibinfo{person}{Tao Lei}, \bibinfo{person}{Wengong Jin},
  \bibinfo{person}{Regina Barzilay}, {and} \bibinfo{person}{Tommi Jaakkola}.}
  \bibinfo{year}{2017}\natexlab{}.
\newblock \showarticletitle{{D}eriving {N}eural {A}rchitectures from {S}equence
  and {G}raph {K}ernels}. In \bibinfo{booktitle}{\emph{ICML}}.
  \bibinfo{pages}{2024--2033}.
\newblock


\bibitem[\protect\citeauthoryear{Leskovec and Mcauley}{Leskovec and
  Mcauley}{2012}]%
        {leskovec2012learning}
\bibfield{author}{\bibinfo{person}{Jure Leskovec} {and}
  \bibinfo{person}{Julian~J Mcauley}.} \bibinfo{year}{2012}\natexlab{}.
\newblock \showarticletitle{{L}earning to {D}iscover {S}ocial {C}ircles in
  {E}go {N}etworks}. In \bibinfo{booktitle}{\emph{NIPS}}.
  \bibinfo{pages}{539--547}.
\newblock


\bibitem[\protect\citeauthoryear{Li, Zhu, Chi, and Zhang}{Li
  et~al\mbox{.}}{2012}]%
        {li2012nest}
\bibfield{author}{\bibinfo{person}{Bin Li}, \bibinfo{person}{Xingquan Zhu},
  \bibinfo{person}{Lianhua Chi}, {and} \bibinfo{person}{Chengqi Zhang}.}
  \bibinfo{year}{2012}\natexlab{}.
\newblock \showarticletitle{{N}ested {S}ubtree {H}ash {K}ernels for
  {L}arge-scale {G}raph {C}lassification over {S}treams}. In
  \bibinfo{booktitle}{\emph{ICDM}}. \bibinfo{pages}{399--408}.
\newblock


\bibitem[\protect\citeauthoryear{Li, Hu, Tang, and Liu}{Li
  et~al\mbox{.}}{2015}]%
        {li2015unsupervised}
\bibfield{author}{\bibinfo{person}{Jundong Li}, \bibinfo{person}{Xia Hu},
  \bibinfo{person}{Jiliang Tang}, {and} \bibinfo{person}{Huan Liu}.}
  \bibinfo{year}{2015}\natexlab{}.
\newblock \showarticletitle{{U}nsupervised {S}treaming {F}eature {S}election in
  {S}ocial {M}edia}. In \bibinfo{booktitle}{\emph{CIKM}}.
  \bibinfo{pages}{1041--1050}.
\newblock


\bibitem[\protect\citeauthoryear{Li, Tarlow, Brockschmidt, and Zemel}{Li
  et~al\mbox{.}}{2016}]%
        {li2016gated}
\bibfield{author}{\bibinfo{person}{Yujia Li}, \bibinfo{person}{Daniel Tarlow},
  \bibinfo{person}{Marc Brockschmidt}, {and} \bibinfo{person}{Richard~S.
  Zemel}.} \bibinfo{year}{2016}\natexlab{}.
\newblock \showarticletitle{{G}ated {G}raph {S}equence {N}eural {N}etworks}. In
  \bibinfo{booktitle}{\emph{ICLR}}.
\newblock


\bibitem[\protect\citeauthoryear{Lian, Zheng, Zheng, Ge, Cao, Tsang, and
  Xie}{Lian et~al\mbox{.}}{2018}]%
        {lian2018high}
\bibfield{author}{\bibinfo{person}{Defu Lian}, \bibinfo{person}{Kai Zheng},
  \bibinfo{person}{Vincent~W Zheng}, \bibinfo{person}{Yong Ge},
  \bibinfo{person}{Longbing Cao}, \bibinfo{person}{Ivor~W Tsang}, {and}
  \bibinfo{person}{Xing Xie}.} \bibinfo{year}{2018}\natexlab{}.
\newblock \showarticletitle{High-order {P}roximity {P}reserving {I}nformation
  {N}etwork {H}ashing}. In \bibinfo{booktitle}{\emph{SIGKDD}}.
  \bibinfo{pages}{1744--1753}.
\newblock


\bibitem[\protect\citeauthoryear{Liben-Nowell and Kleinberg}{Liben-Nowell and
  Kleinberg}{2007}]%
        {liben2007link}
\bibfield{author}{\bibinfo{person}{David Liben-Nowell} {and}
  \bibinfo{person}{Jon Kleinberg}.} \bibinfo{year}{2007}\natexlab{}.
\newblock \showarticletitle{{T}he {L}ink-prediction {P}roblem for {S}ocial
  {N}etworks}.
\newblock \bibinfo{journal}{\emph{Journal of the Association for Information
  Science and Technology}} \bibinfo{volume}{58}, \bibinfo{number}{7}
  (\bibinfo{year}{2007}), \bibinfo{pages}{1019--1031}.
\newblock


\bibitem[\protect\citeauthoryear{Liu, Mu, Kumar, and Chang}{Liu
  et~al\mbox{.}}{2014}]%
        {liu2014discrete}
\bibfield{author}{\bibinfo{person}{Wei Liu}, \bibinfo{person}{Cun Mu},
  \bibinfo{person}{Sanjiv Kumar}, {and} \bibinfo{person}{Shih-Fu Chang}.}
  \bibinfo{year}{2014}\natexlab{}.
\newblock \showarticletitle{{D}iscrete {G}raph {H}ashing}. In
  \bibinfo{booktitle}{\emph{NIPS}}. \bibinfo{pages}{3419--3427}.
\newblock


\bibitem[\protect\citeauthoryear{Manku, Jain, and Das~Sarma}{Manku
  et~al\mbox{.}}{2007}]%
        {manku2007detecting}
\bibfield{author}{\bibinfo{person}{Gurmeet~Singh Manku},
  \bibinfo{person}{Arvind Jain}, {and} \bibinfo{person}{Anish Das~Sarma}.}
  \bibinfo{year}{2007}\natexlab{}.
\newblock \showarticletitle{{D}etecting {N}ear-duplicates for {W}eb
  {C}rawling}. In \bibinfo{booktitle}{\emph{WWW}}. \bibinfo{pages}{141--150}.
\newblock


\bibitem[\protect\citeauthoryear{Mikolov, Sutskever, Chen, Corrado, and
  Dean}{Mikolov et~al\mbox{.}}{2013}]%
        {mikolov2013distributed}
\bibfield{author}{\bibinfo{person}{Tomas Mikolov}, \bibinfo{person}{Ilya
  Sutskever}, \bibinfo{person}{Kai Chen}, \bibinfo{person}{Greg~S Corrado},
  {and} \bibinfo{person}{Jeff Dean}.} \bibinfo{year}{2013}\natexlab{}.
\newblock \showarticletitle{{D}istributed {R}epresentations of {W}ords and
  {P}hrases and {T}heir {C}ompositionality}. In
  \bibinfo{booktitle}{\emph{NIPS}}. \bibinfo{pages}{3111--3119}.
\newblock


\bibitem[\protect\citeauthoryear{Morris, Ritzert, Fey, Hamilton, Lenssen,
  Rattan, and Grohe}{Morris et~al\mbox{.}}{2019}]%
        {morris2019weisfeiler}
\bibfield{author}{\bibinfo{person}{Christopher Morris}, \bibinfo{person}{Martin
  Ritzert}, \bibinfo{person}{Matthias Fey}, \bibinfo{person}{William~L
  Hamilton}, \bibinfo{person}{Jan~Eric Lenssen}, \bibinfo{person}{Gaurav
  Rattan}, {and} \bibinfo{person}{Martin Grohe}.}
  \bibinfo{year}{2019}\natexlab{}.
\newblock \showarticletitle{{W}eisfeiler and {L}eman {G}o {N}eural:
  {H}igher-order {G}raph {N}eural {N}etworks}. In
  \bibinfo{booktitle}{\emph{AAAI}}. \bibinfo{pages}{4602--4609}.
\newblock


\bibitem[\protect\citeauthoryear{Nickel, Murphy, Tresp, and Gabrilovich}{Nickel
  et~al\mbox{.}}{2015}]%
        {nickel2015review}
\bibfield{author}{\bibinfo{person}{Maximilian Nickel}, \bibinfo{person}{Kevin
  Murphy}, \bibinfo{person}{Volker Tresp}, {and} \bibinfo{person}{Evgeniy
  Gabrilovich}.} \bibinfo{year}{2015}\natexlab{}.
\newblock \showarticletitle{{A} {R}eview of {R}elational {M}achine {L}earning
  for {K}nowledge {G}raphs}.
\newblock \bibinfo{journal}{\emph{Proc. IEEE}} \bibinfo{volume}{104},
  \bibinfo{number}{1} (\bibinfo{year}{2015}), \bibinfo{pages}{11--33}.
\newblock


\bibitem[\protect\citeauthoryear{Oyetunde, Zhang, Chen, Tang, and Lo}{Oyetunde
  et~al\mbox{.}}{2017}]%
        {oyetunde2017boostgapfill}
\bibfield{author}{\bibinfo{person}{Tolutola Oyetunde}, \bibinfo{person}{Muhan
  Zhang}, \bibinfo{person}{Yixin Chen}, \bibinfo{person}{Yinjie Tang}, {and}
  \bibinfo{person}{Cynthia Lo}.} \bibinfo{year}{2017}\natexlab{}.
\newblock \showarticletitle{{B}oostgapfill: {I}mproving the {F}idelity of
  {M}etabolic {N}etwork {R}econstructions through {I}ntegrated {C}onstraint and
  {P}attern-based {M}ethods}.
\newblock \bibinfo{journal}{\emph{Bioinformatics}} \bibinfo{volume}{33},
  \bibinfo{number}{4} (\bibinfo{year}{2017}), \bibinfo{pages}{608--611}.
\newblock


\bibitem[\protect\citeauthoryear{Perozzi, Al-Rfou, and Skiena}{Perozzi
  et~al\mbox{.}}{2014}]%
        {perozzi2014deepwalk}
\bibfield{author}{\bibinfo{person}{Bryan Perozzi}, \bibinfo{person}{Rami
  Al-Rfou}, {and} \bibinfo{person}{Steven Skiena}.}
  \bibinfo{year}{2014}\natexlab{}.
\newblock \showarticletitle{{D}eep{W}alk: {O}nline {L}earning of {S}ocial
  {R}epresentations}. In \bibinfo{booktitle}{\emph{KDD}}.
  \bibinfo{pages}{701--710}.
\newblock


\bibitem[\protect\citeauthoryear{Qiu, Dong, Ma, Li, Wang, and Tang}{Qiu
  et~al\mbox{.}}{2018}]%
        {qiu2018network}
\bibfield{author}{\bibinfo{person}{Jiezhong Qiu}, \bibinfo{person}{Yuxiao
  Dong}, \bibinfo{person}{Hao Ma}, \bibinfo{person}{Jian Li},
  \bibinfo{person}{Kuansan Wang}, {and} \bibinfo{person}{Jie Tang}.}
  \bibinfo{year}{2018}\natexlab{}.
\newblock \showarticletitle{{N}etwork {E}mbedding as {M}atrix {F}actorization:
  {U}nifying {D}eep{W}alk, {LINE}, {PTE}, and node2vec}. In
  \bibinfo{booktitle}{\emph{WSDM}}. \bibinfo{pages}{459--467}.
\newblock


\bibitem[\protect\citeauthoryear{Salakhutdinov and Hinton}{Salakhutdinov and
  Hinton}{2009}]%
        {salakhutdinov2009semantic}
\bibfield{author}{\bibinfo{person}{Ruslan Salakhutdinov} {and}
  \bibinfo{person}{Geoffrey Hinton}.} \bibinfo{year}{2009}\natexlab{}.
\newblock \showarticletitle{{S}emantic {H}ashing}.
\newblock \bibinfo{journal}{\emph{International Journal of Approximate
  Reasoning}} \bibinfo{volume}{50}, \bibinfo{number}{7} (\bibinfo{year}{2009}),
  \bibinfo{pages}{969--978}.
\newblock


\bibitem[\protect\citeauthoryear{Sch{\"u}tt, Arbabzadah, Chmiela, M{\"u}ller,
  and Tkatchenko}{Sch{\"u}tt et~al\mbox{.}}{2017}]%
        {schutt2017quantum}
\bibfield{author}{\bibinfo{person}{Kristof~T Sch{\"u}tt},
  \bibinfo{person}{Farhad Arbabzadah}, \bibinfo{person}{Stefan Chmiela},
  \bibinfo{person}{Klaus~R M{\"u}ller}, {and} \bibinfo{person}{Alexandre
  Tkatchenko}.} \bibinfo{year}{2017}\natexlab{}.
\newblock \showarticletitle{{Q}uantum-chemical {I}nsights from {D}eep {T}ensor
  {N}eural {N}etworks}.
\newblock \bibinfo{journal}{\emph{Nature communications}} \bibinfo{volume}{8},
  \bibinfo{number}{1} (\bibinfo{year}{2017}), \bibinfo{pages}{1--8}.
\newblock


\bibitem[\protect\citeauthoryear{Shen, Cao, Zou, and Wang}{Shen
  et~al\mbox{.}}{2018}]%
        {shen2018flexible}
\bibfield{author}{\bibinfo{person}{Enya Shen}, \bibinfo{person}{Zhidong Cao},
  \bibinfo{person}{Changqing Zou}, {and} \bibinfo{person}{Jianmin Wang}.}
  \bibinfo{year}{2018}\natexlab{}.
\newblock \bibinfo{title}{{F}lexible {A}ttributed {N}etwork {E}mbedding}.
\newblock
\newblock
\showeprint[arxiv]{1811.10789}~[cs.SI]


\bibitem[\protect\citeauthoryear{Shervashidze, Schweitzer, Van~Leeuwen,
  Mehlhorn, and Borgwardt}{Shervashidze et~al\mbox{.}}{2011}]%
        {shervashidze2011weisfeiler}
\bibfield{author}{\bibinfo{person}{Nino Shervashidze}, \bibinfo{person}{Pascal
  Schweitzer}, \bibinfo{person}{Erik~Jan Van~Leeuwen}, \bibinfo{person}{Kurt
  Mehlhorn}, {and} \bibinfo{person}{Karsten~M Borgwardt}.}
  \bibinfo{year}{2011}\natexlab{}.
\newblock \showarticletitle{{W}eisfeiler-{L}ehman {G}raph {K}ernels.}
\newblock \bibinfo{journal}{\emph{Journal of Machine Learning Research}}
  \bibinfo{volume}{12}, \bibinfo{number}{9} (\bibinfo{year}{2011}).
\newblock


\bibitem[\protect\citeauthoryear{Shi, Xing, Xu, Sapkota, and Yang}{Shi
  et~al\mbox{.}}{2017}]%
        {shi2017asymmetric}
\bibfield{author}{\bibinfo{person}{Xiaoshuang Shi}, \bibinfo{person}{Fuyong
  Xing}, \bibinfo{person}{Kaidi Xu}, \bibinfo{person}{Manish Sapkota}, {and}
  \bibinfo{person}{Lin Yang}.} \bibinfo{year}{2017}\natexlab{}.
\newblock \showarticletitle{{A}symmetric {D}iscrete {G}raph {H}ashing}. In
  \bibinfo{booktitle}{\emph{AAAI}}. \bibinfo{pages}{2541--2547}.
\newblock


\bibitem[\protect\citeauthoryear{Tan, Liu, Zhao, Yang, Zhou, and Hu}{Tan
  et~al\mbox{.}}{2020}]%
        {tan2020learning}
\bibfield{author}{\bibinfo{person}{Qiaoyu Tan}, \bibinfo{person}{Ninghao Liu},
  \bibinfo{person}{Xing Zhao}, \bibinfo{person}{Hongxia Yang},
  \bibinfo{person}{Jingren Zhou}, {and} \bibinfo{person}{Xia Hu}.}
  \bibinfo{year}{2020}\natexlab{}.
\newblock \showarticletitle{{L}earning to {H}ash with {G}raph {N}eural
  {N}etworks for {R}ecommender {S}ystems}. In \bibinfo{booktitle}{\emph{WWW}}.
  \bibinfo{pages}{1988--1998}.
\newblock


\bibitem[\protect\citeauthoryear{Tang, Qu, Wang, Zhang, Yan, and Mei}{Tang
  et~al\mbox{.}}{2015}]%
        {tang2015line}
\bibfield{author}{\bibinfo{person}{Jian Tang}, \bibinfo{person}{Meng Qu},
  \bibinfo{person}{Mingzhe Wang}, \bibinfo{person}{Ming Zhang},
  \bibinfo{person}{Jun Yan}, {and} \bibinfo{person}{Qiaozhu Mei}.}
  \bibinfo{year}{2015}\natexlab{}.
\newblock \showarticletitle{{LINE}: {L}arge-scale {I}nformation {N}etwork
  {E}mbedding}. In \bibinfo{booktitle}{\emph{WWW}}.
  \bibinfo{pages}{1067--1077}.
\newblock


\bibitem[\protect\citeauthoryear{Tang, Zhang, Yao, Li, Zhang, and Su}{Tang
  et~al\mbox{.}}{2008}]%
        {tang2008arnetminer}
\bibfield{author}{\bibinfo{person}{Jie Tang}, \bibinfo{person}{Jing Zhang},
  \bibinfo{person}{Limin Yao}, \bibinfo{person}{Juanzi Li}, \bibinfo{person}{Li
  Zhang}, {and} \bibinfo{person}{Zhong Su}.} \bibinfo{year}{2008}\natexlab{}.
\newblock \showarticletitle{{A}rnet{M}iner: {E}xtraction and {M}ining of
  {A}cademic {S}ocial {N}etworks}. In \bibinfo{booktitle}{\emph{KDD}}.
  \bibinfo{pages}{990--998}.
\newblock


\bibitem[\protect\citeauthoryear{Tu, Liu, Liu, and Sun}{Tu
  et~al\mbox{.}}{2017}]%
        {tu2017cane}
\bibfield{author}{\bibinfo{person}{Cunchao Tu}, \bibinfo{person}{Han Liu},
  \bibinfo{person}{Zhiyuan Liu}, {and} \bibinfo{person}{Maosong Sun}.}
  \bibinfo{year}{2017}\natexlab{}.
\newblock \showarticletitle{{C}ane: {C}ontext-{A}ware {N}etwork {E}mbedding for
  {R}elation {M}odeling}. In \bibinfo{booktitle}{\emph{ACL}},
  Vol.~\bibinfo{volume}{1}. \bibinfo{pages}{1722--1731}.
\newblock


\bibitem[\protect\citeauthoryear{Weinberger, Dasgupta, Langford, Smola, and
  Attenberg}{Weinberger et~al\mbox{.}}{2009}]%
        {weinberger2009feature}
\bibfield{author}{\bibinfo{person}{Kilian Weinberger}, \bibinfo{person}{Anirban
  Dasgupta}, \bibinfo{person}{John Langford}, \bibinfo{person}{Alex Smola},
  {and} \bibinfo{person}{Josh Attenberg}.} \bibinfo{year}{2009}\natexlab{}.
\newblock \showarticletitle{{F}eature {H}ashing for {L}arge {S}cale {M}ultitask
  {L}earning}. In \bibinfo{booktitle}{\emph{ICML}}.
  \bibinfo{pages}{1113--1120}.
\newblock


\bibitem[\protect\citeauthoryear{Weisfeiler and Lehman}{Weisfeiler and
  Lehman}{1968}]%
        {weisfeiler1968reduction}
\bibfield{author}{\bibinfo{person}{Boris Weisfeiler} {and}
  \bibinfo{person}{Andrei~A Lehman}.} \bibinfo{year}{1968}\natexlab{}.
\newblock \showarticletitle{{A} {R}eduction of a {G}raph to a {C}anonical
  {F}orm and an {A}lgebra {A}rising {D}uring this {R}eduction}.
\newblock \bibinfo{journal}{\emph{Nauchno-Technicheskaya Informatsia}}
  \bibinfo{volume}{2}, \bibinfo{number}{9} (\bibinfo{year}{1968}),
  \bibinfo{pages}{12--16}.
\newblock


\bibitem[\protect\citeauthoryear{Weiss, Torralba, and Fergus}{Weiss
  et~al\mbox{.}}{2009}]%
        {weiss2009spectral}
\bibfield{author}{\bibinfo{person}{Yair Weiss}, \bibinfo{person}{Antonio
  Torralba}, {and} \bibinfo{person}{Rob Fergus}.}
  \bibinfo{year}{2009}\natexlab{}.
\newblock \showarticletitle{{S}pectral {H}ashing}. In
  \bibinfo{booktitle}{\emph{NIPS}}. \bibinfo{pages}{1753--1760}.
\newblock


\bibitem[\protect\citeauthoryear{Wu, Li, Chen, Gao, and Zhang}{Wu
  et~al\mbox{.}}{2020}]%
        {wu2018review}
\bibfield{author}{\bibinfo{person}{Wei Wu}, \bibinfo{person}{Bin Li},
  \bibinfo{person}{Ling Chen}, \bibinfo{person}{Junbin Gao}, {and}
  \bibinfo{person}{Chengqi Zhang}.} \bibinfo{year}{2020}\natexlab{}.
\newblock \showarticletitle{{A} {R}eview for {W}eighted {M}inHash
  {A}lgorithms}.
\newblock \bibinfo{journal}{\emph{IEEE Transactions on Knowledge and Data
  Engineering}} (\bibinfo{year}{2020}), \bibinfo{pages}{1--1}.
\newblock


\bibitem[\protect\citeauthoryear{Wu, Li, Chen, and Zhang}{Wu
  et~al\mbox{.}}{2016}]%
        {wu2016canonical}
\bibfield{author}{\bibinfo{person}{Wei Wu}, \bibinfo{person}{Bin Li},
  \bibinfo{person}{Ling Chen}, {and} \bibinfo{person}{Chengqi Zhang}.}
  \bibinfo{year}{2016}\natexlab{}.
\newblock \showarticletitle{{C}anonical {C}onsistent {W}eighted {S}ampling for
  {R}eal-{V}alue {W}eighted {M}in-{H}ash}. In \bibinfo{booktitle}{\emph{ICDM}}.
  \bibinfo{pages}{1287--1292}.
\newblock


\bibitem[\protect\citeauthoryear{Wu, Li, Chen, and Zhang}{Wu
  et~al\mbox{.}}{2017}]%
        {wu2017consistent}
\bibfield{author}{\bibinfo{person}{Wei Wu}, \bibinfo{person}{Bin Li},
  \bibinfo{person}{Ling Chen}, {and} \bibinfo{person}{Chengqi Zhang}.}
  \bibinfo{year}{2017}\natexlab{}.
\newblock \showarticletitle{{C}onsistent {W}eighted {S}ampling {Ma}de {M}ore
  {P}ractical}. In \bibinfo{booktitle}{\emph{WWW}}.
  \bibinfo{pages}{1035--1043}.
\newblock


\bibitem[\protect\citeauthoryear{Wu, Li, Chen, and Zhang}{Wu
  et~al\mbox{.}}{2018a}]%
        {wu2018efficient}
\bibfield{author}{\bibinfo{person}{Wei Wu}, \bibinfo{person}{Bin Li},
  \bibinfo{person}{Ling Chen}, {and} \bibinfo{person}{Chengqi Zhang}.}
  \bibinfo{year}{2018}\natexlab{a}.
\newblock \showarticletitle{{E}fficient {A}ttributed {N}etwork {E}mbedding via
  {R}ecursive {R}andomized {H}ashing}. In \bibinfo{booktitle}{\emph{IJCAI-18}}.
  \bibinfo{pages}{2861--2867}.
\newblock


\bibitem[\protect\citeauthoryear{Wu, Li, Chen, Zhang, and Philip}{Wu
  et~al\mbox{.}}{2018b}]%
        {wu2018improved}
\bibfield{author}{\bibinfo{person}{Wei Wu}, \bibinfo{person}{Bin Li},
  \bibinfo{person}{Ling Chen}, \bibinfo{person}{Chengqi Zhang}, {and}
  \bibinfo{person}{S~Yu Philip}.} \bibinfo{year}{2018}\natexlab{b}.
\newblock \showarticletitle{{I}mproved {C}onsistent {W}eighted {S}ampling
  {R}evisited}.
\newblock \bibinfo{journal}{\emph{IEEE Transactions on Knowledge and Data
  Engineering}} \bibinfo{volume}{31}, \bibinfo{number}{12}
  (\bibinfo{year}{2018}), \bibinfo{pages}{2332--2345}.
\newblock


\bibitem[\protect\citeauthoryear{Wu, Li, Chen, Zhu, and Zhang}{Wu
  et~al\mbox{.}}{2018c}]%
        {wu2017k}
\bibfield{author}{\bibinfo{person}{Wei Wu}, \bibinfo{person}{Bin Li},
  \bibinfo{person}{Ling Chen}, \bibinfo{person}{Xingquan Zhu}, {and}
  \bibinfo{person}{Chengqi Zhang}.} \bibinfo{year}{2018}\natexlab{c}.
\newblock \showarticletitle{{$K$}-{A}ry {T}ree {H}ashing for {F}ast {G}raph
  {C}lassification}.
\newblock \bibinfo{journal}{\emph{IEEE Transactions on Knowledge and Data
  Engineering}} \bibinfo{volume}{30}, \bibinfo{number}{5}
  (\bibinfo{year}{2018}), \bibinfo{pages}{936--949}.
\newblock


\bibitem[\protect\citeauthoryear{Xu, Hu, Leskovec, and Jegelka}{Xu
  et~al\mbox{.}}{2019}]%
        {xu2018how}
\bibfield{author}{\bibinfo{person}{Keyulu Xu}, \bibinfo{person}{Weihua Hu},
  \bibinfo{person}{Jure Leskovec}, {and} \bibinfo{person}{Stefanie Jegelka}.}
  \bibinfo{year}{2019}\natexlab{}.
\newblock \showarticletitle{{H}ow {P}owerful are {G}raph {N}eural {N}etworks?}.
  In \bibinfo{booktitle}{\emph{ICLR}}.
\newblock


\bibitem[\protect\citeauthoryear{Yang, Liu, Zhao, Sun, and Chang}{Yang
  et~al\mbox{.}}{2015}]%
        {yang2015network}
\bibfield{author}{\bibinfo{person}{Cheng Yang}, \bibinfo{person}{Zhiyuan Liu},
  \bibinfo{person}{Deli Zhao}, \bibinfo{person}{Maosong Sun}, {and}
  \bibinfo{person}{Edward~Y Chang}.} \bibinfo{year}{2015}\natexlab{}.
\newblock \showarticletitle{{N}etwork {R}epresentation {L}earning with {R}ich
  {T}ext {I}nformation.}. In \bibinfo{booktitle}{\emph{IJCAI}}.
  \bibinfo{pages}{2111--2117}.
\newblock


\bibitem[\protect\citeauthoryear{Yang, Rosso, Li, and Cudre-Mauroux}{Yang
  et~al\mbox{.}}{2019b}]%
        {yang2019nodesketch}
\bibfield{author}{\bibinfo{person}{Dingqi Yang}, \bibinfo{person}{Paolo Rosso},
  \bibinfo{person}{Bin Li}, {and} \bibinfo{person}{Philippe Cudre-Mauroux}.}
  \bibinfo{year}{2019}\natexlab{b}.
\newblock \showarticletitle{{N}ode{S}ketch: {H}ighly-{E}fficient {G}raph
  {E}mbeddings via {R}ecursive {S}ketching}. In
  \bibinfo{booktitle}{\emph{KDD}}. \bibinfo{pages}{1162--1172}.
\newblock


\bibitem[\protect\citeauthoryear{Yang, Pan, Chen, Zhou, and Zhang}{Yang
  et~al\mbox{.}}{2019a}]%
        {yang2019low}
\bibfield{author}{\bibinfo{person}{Hong Yang}, \bibinfo{person}{Shirui Pan},
  \bibinfo{person}{Ling Chen}, \bibinfo{person}{Chuan Zhou}, {and}
  \bibinfo{person}{Peng Zhang}.} \bibinfo{year}{2019}\natexlab{a}.
\newblock \showarticletitle{{L}ow-{B}it {Q}uantization for {A}ttributed
  {N}etwork {R}epresentation {L}earning}. In \bibinfo{booktitle}{\emph{IJCAI}}.
  \bibinfo{pages}{4047--4053}.
\newblock


\bibitem[\protect\citeauthoryear{Yang, Pan, Zhang, Chen, Lian, and Zhang}{Yang
  et~al\mbox{.}}{2018}]%
        {yang2018binarized}
\bibfield{author}{\bibinfo{person}{Hong Yang}, \bibinfo{person}{Shirui Pan},
  \bibinfo{person}{Peng Zhang}, \bibinfo{person}{Ling Chen},
  \bibinfo{person}{Defu Lian}, {and} \bibinfo{person}{Chengqi Zhang}.}
  \bibinfo{year}{2018}\natexlab{}.
\newblock \showarticletitle{{B}inarized {A}ttributed {N}etwork {E}mbedding}. In
  \bibinfo{booktitle}{\emph{ICDM}}. \bibinfo{pages}{1476--1481}.
\newblock


\bibitem[\protect\citeauthoryear{You, Ying, and Leskovec}{You
  et~al\mbox{.}}{2019}]%
        {you2019position}
\bibfield{author}{\bibinfo{person}{Jiaxuan You}, \bibinfo{person}{Rex Ying},
  {and} \bibinfo{person}{Jure Leskovec}.} \bibinfo{year}{2019}\natexlab{}.
\newblock \showarticletitle{{P}osition-aware {G}raph {N}eural {N}etworks}. In
  \bibinfo{booktitle}{\emph{ICML}}. \bibinfo{pages}{7134--7143}.
\newblock


\bibitem[\protect\citeauthoryear{Zhang, Yin, Zhu, and Zhang}{Zhang
  et~al\mbox{.}}{2016}]%
        {zhang2016homophily}
\bibfield{author}{\bibinfo{person}{Daokun Zhang}, \bibinfo{person}{Jie Yin},
  \bibinfo{person}{Xingquan Zhu}, {and} \bibinfo{person}{Chengqi Zhang}.}
  \bibinfo{year}{2016}\natexlab{}.
\newblock \showarticletitle{{H}omophily, {S}tructure, and {C}ontent {A}ugmented
  {N}etwork {R}epresentation {L}earning}. In \bibinfo{booktitle}{\emph{ICDM}}.
  \bibinfo{pages}{609--618}.
\newblock


\bibitem[\protect\citeauthoryear{Zhang, Xu, Arnab, and Torr}{Zhang
  et~al\mbox{.}}{2020}]%
        {zhang2020dynamic}
\bibfield{author}{\bibinfo{person}{Li Zhang}, \bibinfo{person}{Dan Xu},
  \bibinfo{person}{Anurag Arnab}, {and} \bibinfo{person}{Philip~HS Torr}.}
  \bibinfo{year}{2020}\natexlab{}.
\newblock \showarticletitle{{D}ynamic {G}raph {M}essage {P}assing {N}etworks}.
  In \bibinfo{booktitle}{\emph{CVPR}}. \bibinfo{pages}{3726--3735}.
\newblock


\bibitem[\protect\citeauthoryear{Zhang and Chen}{Zhang and Chen}{2017}]%
        {zhang2017weisfeiler}
\bibfield{author}{\bibinfo{person}{Muhan Zhang} {and} \bibinfo{person}{Yixin
  Chen}.} \bibinfo{year}{2017}\natexlab{}.
\newblock \showarticletitle{{W}eisfeiler-{L}ehman {N}eural {M}achine for {L}ink
  {P}rediction}. In \bibinfo{booktitle}{\emph{KDD}}. \bibinfo{pages}{575--583}.
\newblock


\bibitem[\protect\citeauthoryear{Zhang and Chen}{Zhang and Chen}{2018}]%
        {zhang2018link}
\bibfield{author}{\bibinfo{person}{Muhan Zhang} {and} \bibinfo{person}{Yixin
  Chen}.} \bibinfo{year}{2018}\natexlab{}.
\newblock \showarticletitle{{L}ink {P}rediction based on {G}raph {N}eural
  {N}etworks}. In \bibinfo{booktitle}{\emph{NeurIPS}}.
  \bibinfo{pages}{5165--5175}.
\newblock


\bibitem[\protect\citeauthoryear{Zhou, While, and Kalogerakis}{Zhou
  et~al\mbox{.}}{2019}]%
        {zhou2019scenegraphnet}
\bibfield{author}{\bibinfo{person}{Yang Zhou}, \bibinfo{person}{Zachary While},
  {and} \bibinfo{person}{Evangelos Kalogerakis}.}
  \bibinfo{year}{2019}\natexlab{}.
\newblock \showarticletitle{{S}cene{G}raph{N}et: {N}eural {M}essage {P}assing
  for {3D} {I}ndoor {S}cene {A}ugmentation}. In
  \bibinfo{booktitle}{\emph{ICCV}}. \bibinfo{pages}{7384--7392}.
\newblock


\end{thebibliography}


\end{document}